\documentclass{article}

\usepackage{arxiv}

\usepackage[utf8]{inputenc} 
\usepackage[T1]{fontenc}    
\usepackage{hyperref}       
\usepackage{url}            
\usepackage{booktabs}       
\usepackage{amsfonts}       		
\usepackage{graphicx}
\usepackage{amsmath}
\usepackage{amsthm}
\usepackage{algorithm}
\usepackage{algorithmic}
\usepackage{orcidlink}

\newtheorem{definition}{Definition}
\newtheorem{lemma}{Lemma}
\newtheorem{proposition}{Proposition}
\newtheorem{theorem}{Theorem}

\title{The Push-Forward Transform for Continuous and Robust Comparison of Dynamic Shapes}
\author{
Roua Rouatbi\,\orcidlink{0009-0004-9010-677X}$^{1,2,3,4}$,
Juan-Esteban Suarez Cardona\,\orcidlink{0000-0002-1990-2413}$^{6,7}$,
\And
Ivo F.~Sbalzarini\,\orcidlink{0000-0003-4414-4340}$^{1,2,3,4,5,8}$\\
\\
$^1$ Faculty of Computer Science, Dresden University of Technology, Dresden, Germany \\
$^2$ Max Planck Institute of Molecular Cell Biology and Genetics, Dresden, Germany \\
$^3$ Center for Systems Biology Dresden, Dresden, Germany \\
$^4$ Center for Scalable Data Analytics and Artificial Intelligence (ScaDS.AI) Dresden/Leipzig \\
$^5$ Cluster of Excellence Physics of Life, Dresden University of Technology, Dresden, Germany \\
$^6$ Chair for Mathematical Foundations of Artificial Intelligence, Ludwig-Maximilians-Universit\"at M\"unchen, Munich, Germany \\
$^7$ Munich Center for Machine Learning (MCML), Munich, Germany \\
$^8$ Now at: Department of Mathematical Modeling and Machine Learning, University of Zurich, Zurich, Switzerland
}

\hypersetup{
pdftitle={PF-SDMorphometric},
pdfauthor={Roua Rouatbi, Juan-Esteban Suarez Cardona, Ivo F.~Sbalzarini},
pdfkeywords={Geometric shape analysis, level-set methods, morphometric, shape quantification, invariant shape descriptors, geometric learning},
}

\date{}

\begin{document}

\maketitle

\begin{abstract}
We introduce a mathematical framework for shape comparison based on mapping functions from the shape domain to a common reference domain. This \textit{Push-Forward Transform} enables invariant and robust comparison of shapes, preserving intrinsic geometric information. Quantitatively comparing shapes and their temporal evolution is a fundamental challenge in image analysis. Meaningful shape comparison requires representations that are invariant to transformations that do not alter shape itself, such as translation, rotation, reflection, re-parametrization, and uniform scaling, while remaining sensitive to intrinsic geometric variation. Existing approaches often rely on sensitive parameterizations, landmark correspondence, or learned representations that are difficult to interpret and reproduce. We show that the Push-Forward Transform (PF-T) applied to Signed Distance Functions (SDFs) yields a continuous representation that captures both boundary and interior geometry. We derive an interpretable morphometric that quantifies shape similarity and reveals features such as skeletal topology and rotational symmetries. The push-forward transform applies consistently to two- and three-dimensional shapes, extends to time-evolving geometries, and supports the joint analysis of shape and additional scalar fields defined over shapes, such as intensity or molecular signals. We present the mathematical formulation, describe an efficient algorithm, and benchmark the approach on 2D, 3D, and temporal data sets.
\end{abstract}

\section{Introduction}
\label{sec:introduction}

The comparison of shapes is a fundamental problem in computer vision, medical imaging, and quantitative biology, where shape often serves as the primary descriptor of structure and function. A central challenge is to quantify shape differences in a way that is invariant to transformations that do not alter shape itself, such as translation, rotation, reflection, re-parametrization, and uniform scaling, while remaining sensitive to intrinsic geometric variation.

A natural strategy is to associate each shape with one or more functions defined over its domain, such as distance functions, curvature measures, or spatially varying signals, and to compare these functions across shapes. This approach, however, runs into a fundamental difficulty: functions defined on different shape domains are not directly comparable. Even when two shapes are topologically equivalent, their domains might differ geometrically, so that point-wise comparison is ill-defined and sensitive to arbitrary choices of parameterization or alignment.
Here, we address this comparability problem by introducing the \emph{Push-Forward Transform} (PF-T), which maps any function defined on a shape onto a common reference domain of identical topology through a smooth, structure-preserving diffeomorphism. The common reference domain enables well-defined comparison of distributions over shapes and yields metrics that are shape-preserving by construction. 

The PF-T can be applied to any continuous scalar-valued function over a shape. Of particular interest for deriving geometric descriptors is the \emph{signed distance function} (SDF) to the shape boundary~\cite{Osher2003-ga}. SDFs are (i) well-defined for any closed shape, (ii) encode both boundary location and interior structure in a single scalar field, and (iii) admit a smooth, computable approximation through the viscous Eikonal equation. Transforming the SDF of a shape via the PF-T yields the \emph{Push-Forward Signed Distance Function} (PF-SDF), a continuous and differentiable representation in which shapes can be compared invariantly~\cite{pfsdm}.

We demonstrate the utility of this framework for a range of applications. We show how our computational implementation with the viscous Eikonal equation enables the extraction of intrinsic skeletal structures, providing meaningful geometric shape descriptors that can be further used to construct parametrized reference axes. From the PF-SDF representation, we derive a spectral morphometric, the \emph{Push-Forward Signed Distance Morphometric} (PF-SDM), which is interpretable, invariant to shape-preserving transformations, and robust to noise, as we show in extensive benchmarks. Finally, we apply the PF-T to arbitrary intensity fields defined over shapes, enabling the joint modeling of morphology and spatial patterns as well as their evolution over time. This unified treatment of geometry and spatial signal significantly broadens the applicability of the proposed approach.

The main contributions of this work are:
\begin{itemize}
     \item We develop the theoretical foundations of the \emph{Push-Forward Transform} (PF-T) as a continuous and differentiable framework for comparing shapes and functions defined on them, yielding descriptors that are invariant by construction to translation, rotation, reflection, re-parametrization, and uniform scaling.

    \item We developed efficient approximations of the Signed Distance Function using polynomial surrogates to solve the viscous Eikonal learning problem, yielding a smooth representation from which intrinsic skeletal descriptors and parametrized reference axes (spines) of shapes can be extracted.

    \item We derive a spectral morphometric that provides interpretable and noise-robust shape descriptors capturing meaningful geometric features, such as rotational symmetries. We demonstrate its applicability to 2D shapes, 3D shapes, and time-evolving (2D\,+\,time) shapes.

    \item We extend the push-forward framework beyond geometry to jointly analyze shape and distributions defined over shapes, such as intensity or molecular signals, enabling unified modeling of morphology and spatial patterns across space and time.
\end{itemize}

This paper is structured as follows: In Section~\ref{sec:previous_work}, we review related work in geometric morphometrics and shape analysis. In Section~\ref{sec:push_forward_transform}, we present the PF-T, discuss its mathematical properties, and describe alternative choices of deformation maps. Section~\ref{sec:signed_distance_function} applies the PF-T to SDFs, introduces the viscous Eikonal formulation, and uses it to extract medial axes and parameterized spines. We present experimental benchmark results in Section~\ref{sec:experiments} for various applications, including rotational-symmetry-aware morphometrics, and the fusion of geometric and intensity signal features to predict the fate of time-evolving biological structures. In Section~\ref{sec:conclusion}, we conclude with a summary of findings and directions for future research.

\section{Related Work}\label{sec:previous_work}

Existing approaches to shape analysis and comparison can be broadly categorized by how shapes are represented and compared, and by whether invariance and interpretability are built into the representation or learned from data.

Traditional morphometric methods rely on explicit geometric representations and offer strong interpretability. Landmark-based approaches such as Generalized Procrustes Analysis (GPA)~\cite{Gower1975} compare shapes by aligning key points, hence factoring out translation, rotation, and scale. Boundary-based spectral descriptors, including Elliptical Fourier Analysis (EFA)~\cite{Kuhl1982} and spherical harmonics~\cite{Wieczorek2018}, encode shapes using truncated spectral bases defined on boundaries or surfaces, respectively. These methods are mathematically grounded and computationally efficient but depend on reliable parameterizations or landmark correspondences. In particular, spherical harmonic representations require a non-trivial spherical parameterization, which becomes challenging for shapes that are not star-convex.

An alternative line of work uses global region properties and moment-based descriptors such as aspect ratio, eccentricity, and compactness to summarize shape geometry in a low-dimensional, inherently invariant form. While these features are simple and fast to compute, they capture only coarse geometric characteristics and lack the expressiveness needed to distinguish nuanced morphological differences. Moreover, the choice of which features to use is itself non-trivial, leading back to the original question of what constitutes a meaningful comparison.

Learning-based approaches infer shape representations directly from data and have shown strong performance on large benchmarks. Variational autoencoder models such as ShapeEmbed~\cite{Annafoix} and O2VAE~\cite{Burgess2024}, contrastive learning frameworks (e.g., SimCLR~\cite{simclr}), and transformer-based models such as masked autoencoders (MAE)~\cite{MAE} are among the many methods applied to shape analysis. ShapeEmbed and O2VAE are particularly notable for their treatment of rotation invariance: O2VAE enforces it architecturally through O(2)-equivariant convolutional layers, while ShapeEmbed achieves it at the representation level by encoding shapes as distance matrices between boundary points. Most learning-based approaches, however, rely on large training data sets and offer limited interpretability of the resulting latent dimensions.

Unlike learning-based approaches, our proposed method is fully deterministic and does not require training data, yielding reproducible and interpretable results. At the same time, by operating on continuous functions mapped to a common reference domain, it overcomes key limitations of classical morphometric techniques, such as reliance on landmark correspondences or fragile shape parameterizations. Moreover, our framework naturally extends to additional functions defined over shapes and to time-varying dynamic shapes. This is made possible by the PF-T, as introduced in the next section.

\section{Push-Forward Transform}\label{sec:push_forward_transform}

The \emph{Push-Forward Transform} (PF-T) is the central mathematical concept for our framework. It maps functions defined over different shape domains to a common reference domain of identical topology, rendering a well-defined notion of comparability. This common-domain representation provides the basis for constructing descriptors that are invariant to shape-preserving transformations by construction.
We start by mathematically defining the PF-T and then describe practical constructions, including radial and harmonic extensions, as well as their approximations.
We denote by $\Omega\subseteq\mathbb{R}^d$ an open, bounded, Lipschitz domain, and by $\partial S\subseteq\Omega$ a shape boundary, assumed to be a smooth and closed co-dimension-one manifold embedded in $\Omega$.
Let \( f_S : S \rightarrow \mathbb{R} \) be a scalar field. The PF-T maps \( f_S \) to its push-forward \( f_{S^*} : S_r \rightarrow \mathbb{R} \), defined on a reference domain \( S_r \subseteq (-1,1)^d \), via a diffeomorphism \( \Psi : S_r \rightarrow S \). To formally define the PF-T, we start by recalling the definition of a level-set function:

\begin{definition}[Level-Set Function]\label{def:level_set}
Let $\Omega \subset \mathbb{R}^d$ and let $\partial S \subseteq \Omega$ be an oriented manifold of co-dimension one. Let $L_S : \Omega \to \mathbb{R}$ be a $C^1$ function such that $\nabla L_S \neq 0$ on $\partial S$.

We call $L_S$ a \emph{level-set function} associated with $S$ if and only if 
\[
\partial S = \{ x \in \Omega : L_S(x) = 0 \}.
\]

The interior and exterior domains of the shape $\partial S$ are:
\[
S := \{ x \in \Omega : L_S(x) > 0 \}, \
S^c := \{ x \in \Omega : L_S(x) < 0 \}.
\]
The closure of $S$ is given by $\overline{S}  =S\cup \partial S$.
\end{definition}

Following this, we define the PF-T.

\begin{definition}[Push-Forward Transform]\label{def:def_map}
Let $\partial S \subseteq \Omega$ be a shape with shape domain $S \subseteq \Omega$, and let 
$\partial S_r \subseteq \Omega$ be a reference shape with shape domain $S_r \subseteq \Omega$.
Furthermore, let $\Psi : \overline{S_r} \to \overline{S}$ be a diffeomorphism and $f_S : S\to \mathbb{R}$ a scalar field defined on the shape domain $S$. We define the \emph{Push-Forward Transform} (PF-T) $f_{S^*} : S_r \to \mathbb{R}$ as
\[
f_{S^*}(x) := f_S(\Psi(x)), \qquad \forall x \in S_r.
\]
\end{definition}

In general, determining a smooth and continuous deformation map between two shape domains is a nontrivial task, as it should preserve the geometric structure of the scalar fields being transformed. To address this, we first construct the boundary deformation map $\psi_S : \partial S_r \to \partial S$ and then choose an extension $\Psi: \overline{S_r} \to \overline{S}$, such that $\Psi|_{\partial S_r}= \psi_S$.

\begin{definition}[Boundary Deformation Map]
Let $\partial S, \partial S_r \subseteq \Omega$ be the target and reference shapes, respectively. The \emph{boundary deformation map} $\psi_S : \partial S_r \to \partial S$ is a diffeomorphism approximating the closest-point projection 
\begin{equation}\label{eq:def_bdy}
\inf_{\psi_S \in C^1(\partial S_r, \partial S)} 
\int\limits_{\partial S_r} \|\psi_S(x) - x_S(x)\|_2^2 \, \mathrm{d}\mathcal{H}(x)\, ,    
\end{equation}
where $x_S(x)$ denotes the closest-point projection of $x \in \partial S_r$ onto $\partial S$ and $\mathcal{H}$ is the Hausdorff-measure.
\end{definition}
To extend the boundary deformation map over the whole shape domain, different choices of extensions are possible, as we discuss next.

\subsection{Deformation Map Extensions}
We describe two possible extensions of the boundary deformation map $\psi_S$ for a given shape $\partial S$, namely the \emph{radial} and \emph{harmonic} extensions. For shapes that are topological spheres, we use the unit sphere $\partial B_r:=\{x\in \mathbb{R}^d:||x||_2=1\}$, with interior $B_r:=\{x\in \mathbb{R}^d:||x||_2<1\}$, as the reference shape. While other reference shapes can be used for shapes of different homology groups, this choice provides a simple domain that admits natural radial and harmonic extensions from the boundary to the interior. We start by defining the radial extension:

\begin{figure}[!t]
\centering
\includegraphics[width=3.5 in]{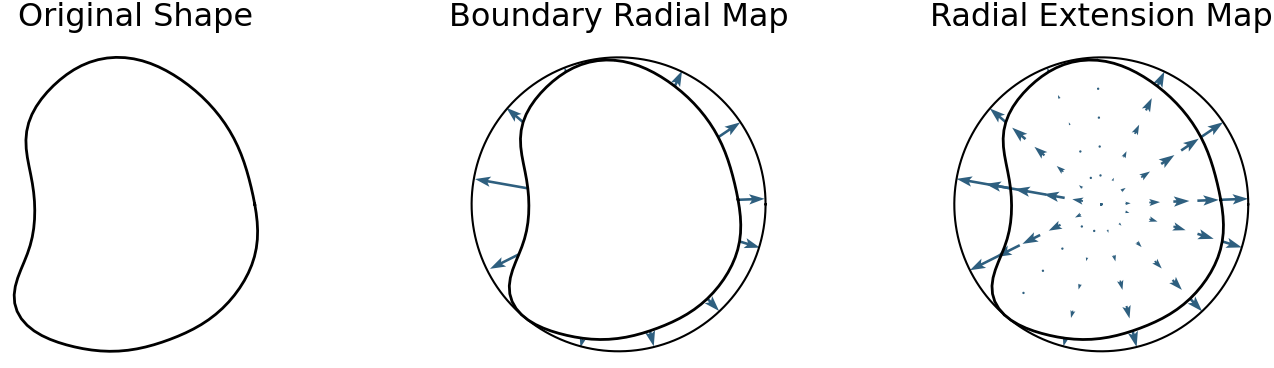}
\caption{Radial mapping of a star-convex shape to the unit disk. Left: original shape. Middle: radial boundary deformation map. Right: radial extension map.}
\label{fig:radial_map}
\end{figure}

\begin{definition}[Radial Extension]\label{def:radially_const}
Let $\Theta_d \subset \mathbb{R}^{d-1}$ denote the angular parameter domain (e.g., $\Theta_2 = [0, 2\pi)$ for $d=2$ and $\Theta_3 = [0,\pi] \times [0,2\pi)$ for $d=3$). Let $\psi_S : \partial B_r \to \partial S$ be a boundary deformation map
expressed in spherical coordinates as
\[
\psi_S(\theta) = (\nu(\theta), \theta), \qquad \forall \theta \in \Theta_d,
\]
for some $\nu \in C^1(\Theta_d; \mathbb{R}_+)$.

We define the \emph{radial extension} $\Psi_r : \overline{B_r} \to \overline{S}$ as
\[
\Psi_r(r,\theta) := (r\,\nu(\theta), \theta), 
\qquad \forall (r,\theta) \in [0,1] \times \Theta_d.
\]
\end{definition}

Figure~\ref{fig:radial_map} illustrates the radial boundary deformation map and its extension applied to a star-convex shape. The radial extension map shows how the original shape is stretched radially onto the reference unit circle while preserving angular coordinates.

The main limitation of the radial extension is that it fails the injectivity assumption in Definition~\ref{def:def_map} for non-star-convex shapes. This motivates the harmonic extension as a less restrictive, but more complex alternative for a broader class of smooth closed shapes:

\begin{figure}[!t]
\centering
\includegraphics[width=3.5 in]{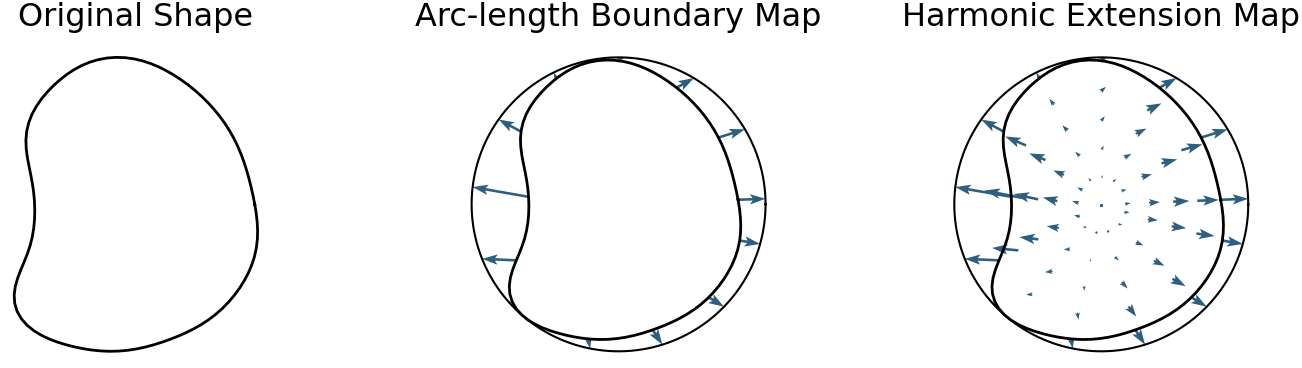}
\caption{Harmonic mapping of a star-convex shape to the unit disk. Left: original shape. Middle: arc-length boundary map. Right: harmonic extension map.}
\label{fig:harmonic_map}
\end{figure}

\begin{definition}[Harmonic Extension]\label{def:harmonic_ext}
Let 
$\psi_S :  \partial B_r \rightarrow \partial S$ 
be a boundary deformation map. The \emph{Harmonic extension} $\Psi_h \in C^2(B_r;\mathbb{R}^d)$ is the function solving
\begin{equation}
    \left\{\begin{array}{ll}
        \Delta \Psi_h(x) = 0, &\forall  x \in B_r\\
        \Psi_h(x) = \psi_S(x),& \forall x\in  \partial B_r,
    \end{array}\right.
\end{equation}
where the Laplacian $\Delta \Psi_h(x)\in \mathbb{R}^d$ is applied component-wise.
\end{definition}
Figure~\ref{fig:harmonic_map} illustrates the harmonic extension on the same star-convex shape as in Fig.~\ref{fig:radial_map} using arc-length parametrization as the boundary deformation map. Unlike the radial extension map which constrains every interior point to lie on the same ray from the origin to the boundary, the harmonic map is free to bend smoothly into the interior, making it well-defined for non-star-convex shapes.

We provide the following result for the radial and harmonic extensions:

\begin{theorem}[Diffeomorphism properties of extensions]\label{thm:diffeomorphism_extensions}
Let $\psi_S : \partial B_r \to \partial S \subset \mathbb{R}^d$ be a $C^1$ boundary deformation map.

\begin{itemize}
\item[(i)] (\emph{Radial extension}) 
Assume that $S\subseteq \Omega$ is star-convex with respect to the origin and that
\[
\psi_S(\theta) = \nu(\theta)\,\omega(\theta), \qquad \nu \in C^1(\partial B_r)\textrm{ with }\nu(\theta) > 0,
\]
where $\omega(\theta) \in  \mathbb{R}^d$ denotes the unit vector pointing in direction $\theta$, e.g, in 2D $\omega(\theta) = (\cos(\theta),\sin(\theta))$.

Then, the radial extension
\[
\Psi_r(r,\theta) = r\,\nu(\theta)\,\omega(\theta)
\]
defines a $C^1$-diffeomorphism from $\overline{B_r} \setminus \{0\}$ onto $\overline{S} \setminus \{0\}$, which extends continuously to the origin.

\item[(ii)] (\emph{Harmonic extension)}  

Let $d=2$ and $\Psi_h \in C^2(B_r;\mathbb{R}^d)\cap C^1(\overline{B_r};\mathbb{R}^d)$ the harmonic extension of $\psi_S$ from Definition~\ref{def:harmonic_ext}.
Then $\Psi_h$ is a $C^1$-diffeomorphism of $\overline{B_r}$ onto $\overline{S}$ if and only if
\begin{equation}\label{eq:non_deg}
\det D\Psi_h > 0 \quad \text{on } \partial B_r.
\end{equation}
\end{itemize}
\end{theorem}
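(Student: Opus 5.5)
For part (i), I would work directly in polar/spherical coordinates. Since $\nu\in C^1$ and $\nu>0$, the map $(r,\theta)\mapsto r\,\nu(\theta)\,\omega(\theta)$ is $C^1$ on $(0,1]\times\Theta_d$. I would then write down an explicit inverse, namely $y\mapsto \bigl(\|y\|_2/\nu(\theta(y)),\,\theta(y)\bigr)$ with $\theta(y)$ the direction of $y/\|y\|_2$. This inverse is $C^1$ away from the origin, because $y\mapsto y/\|y\|_2$ is smooth there and $\nu>0$.

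Bijectivity onto $\overline S\setminus\{0\}$ uses star-convexity together with the hypothesis that $\psi_S=\nu\,\omega$ parametrizes $\partial S$. For each direction $\omega(\theta)$, the ray meets $\partial S$ exactly at radius $\nu(\theta)$, so $\overline S\cap\{t\omega(\theta):t>0\}=\{t\omega(\theta):0<t\le\nu(\theta)\}$. I would justify this by noting that star-convexity makes each such segment lie in $\overline S$, and that $\psi_S$ is a bijection onto $\partial S$.

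Nondegeneracy comes from the Jacobian in polar coordinates. In 2D, $D\Psi_r$ has determinant $r\,\nu(\theta)^2>0$ for $r>0$, since the radial derivative is $\nu\omega$ and the angular derivative is $r(\nu'\omega+\nu\omega^\perp)$. The 3D case is analogous, with determinant $r^2\nu^3\sin\theta_1$ times the spherical factor. Continuity at the origin follows from $|\Psi_r(r,\theta)|\le r\max\nu\to0$.

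For part (ii), I would reduce to the Radó--Kneser--Choquet theorem and Lewy's theorem. The direction "diffeomorphism $\Rightarrow \det D\Psi_h>0$ on $\partial B_r$" is delicate: a $C^1$ diffeomorphism of the closed disk has nonvanishing Jacobian on $\overline{B_r}$, including the boundary if it is a diffeomorphism of manifolds with boundary. Its sign is constant by connectedness, and it is positive once the orientation of $\psi_S$ is fixed to be positive, which I would state as a standing convention.

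For the converse, I would follow the argument via the argument principle. Assume $\det D\Psi_h>0$ on $\partial B_r$. Then $\psi_S$ is a $C^1$ immersion of the circle with positive orientation onto the Jordan curve $\partial S$, hence a degree-one homeomorphism. Writing $\Psi_h=h+\bar g$ with $h,g$ holomorphic, the Jacobian is $|h'|^2-|g'|^2$. Lewy's theorem says a harmonic homeomorphism has nonvanishing Jacobian, and Radó--Kneser--Choquet gives that $\Psi_h$ is a homeomorphism of $B_r$ onto the interior of the convex hull of $\partial S$. The problem is that $S$ need not be convex, so I would instead use the degree argument. The degree of $\Psi_h$ over any $p\in S$ equals the winding number of $\psi_S$ around $p$, which is $1$, and over points outside $\overline S$ it is $0$.

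The main obstacle is to show that $\det D\Psi_h>0$ in the interior. I would try a minimum-principle argument on $\log$ of the Jacobian-related quantity. On a zero-free region, $\mu=g'/h'$ is holomorphic (the second complex dilatation), and $J>0$ is equivalent to $|\mu|<1$. The boundary condition gives $|\mu|<1$ on $\partial B_r$, and the maximum modulus principle extends this to the interior, provided $h'\neq0$. To rule out zeros of $h'$, I would use the argument principle: $h'$ has winding number matching the tangent turning of $\partial S$, which is $1$ because $\partial S$ is a simple closed curve (Hopf's Umlaufsatz). This gives no zeros of $h'$ in the disk.

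With $J>0$ everywhere, $\Psi_h$ is a local diffeomorphism and, by the degree count, a one-to-one proper map. It is therefore a $C^1$ diffeomorphism onto $\overline S$, with $C^1$ regularity up to the boundary coming from the assumed $C^1(\overline{B_r})$ regularity and the inverse function theorem at boundary points where $J>0$.
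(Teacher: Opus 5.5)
Your part (i) follows the paper's argument. The only difference is that you write the inverse $y\mapsto y/\nu(y/\|y\|_2)$ explicitly, where the paper combines the inverse function theorem with global injectivity. That is actually the cleaner choice in $d=3$: the spherical-coordinate determinant $r^2\nu^3\sin\theta_1$ you mention vanishes at the poles, so nondegeneracy there must come from your explicit inverse or from the Cartesian determinant $\nu^d$ computed in the paper.

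Part (ii) takes a genuinely different route. The paper's proof is a one-line appeal to the Alessandrini--Nesi theorem. You instead give a self-contained complex-analytic proof in three steps:
\begin{itemize}
\item the argument principle plus Hopf's Umlaufsatz exclude zeros of $h'$;
\item the maximum modulus principle for the second dilatation $\mu=g'/h'$ carries $|\mu|<1$ from $\partial B_r$ to $\overline{B_r}$;
\item a degree count gives injectivity.
\end{itemize}
This is correct and makes the mechanism visible. It shows that $\det D\Psi_h>0$ on $\partial B_r$ by itself forces $\psi_S$ to be positively oriented. It also shows that an orientation hypothesis is genuinely needed for the ``only if'' direction. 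The paper's statement omits this hypothesis, but Alessandrini--Nesi assume it: $\Psi_h(z)=\bar z$ is a diffeomorphism of the closed disk with $\det D\Psi_h=-1$. The Rad\'o--Kneser--Choquet/Lewy detour is not needed.

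You should fix one bookkeeping error: it is $z\,h'(z)$, not $h'$, whose winding equals the tangent turning number. With $z=e^{i\theta}$,
\[
\partial_\theta \psi_S = i z\, h'(z)\Bigl(1-\bar z^{2}\,\overline{g'(z)}/h'(z)\Bigr),
\]
and the last factor has winding zero precisely because $|g'|<|h'|$ on $\partial B_r$. Hence $\pm1=\mathrm{turn}(\partial S)=1+\mathrm{wind}(h')$.

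The quantity $\mathrm{wind}(h')$ counts the zeros of $h'$ in $B_r$. To see this, apply the argument principle on $|z|=\rho\uparrow 1$, using continuity of $h'$ up to the boundary. So $\mathrm{wind}(h')\ge 0$, which forces the turning number to be $+1$ and $\mathrm{wind}(h')=0$. A winding number of $1$, as you wrote, would produce a zero of $h'$.

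In the degree step, also note that no point of $B_r$ is mapped onto $\partial S$. Since $\Psi_h$ is open where $J>0$, such a point would produce preimages of exterior points, contradicting degree $0$ there.
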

\noindent The proof is given in Appendix \ref{app:theorem}. There is no proof for the diffeomorphism property of the harmonic extension in $d=3$ dimensions. Therefore, we only use the harmonic extension in two dimensions and restrict 3D applications to the star-convex case. 

For shapes on a common scale and centered at the origin\footnote{This can always be achieved w.l.o.g.~by a trivial pre-processing of the data.}, the PF-T is invariant to translation and uniform scaling, as well as to reparametrization of the boundary, since the deformation map $\Psi$ is defined intrinsically through the closest-point projection in Eq.~\eqref{eq:def_bdy} rather than through any specific parametrization of $\partial S$. It is, however, equivariant under rotations and reflections of the shape. To eliminate this remaining ambiguity, we formulate rotation- and reflection-invariant features from spectral representation of the PF-T.

\subsection{Spectral PF-T}\label{subsec:spectral pf-t}

If the reference domain is rotationally symmetric, such as the unit disk, an angular spectral transformation of the PF-T can be used to guarantee invariance to rotations and reflections of the shape descriptors. 
In 2D, this transformation is the Fourier transform, in 3D the spherical harmonics expansion. Early foundational work established Fourier descriptors in $2D$ as an effective tool for analyzing closed planar curves, enabling the characterization of shape invariants and symmetries with robustness to rotation and scaling \cite{fft_plane_closed_curves}. We hence define the spectral PF-T:

\begin{definition}[Spectral Push-Forward Transform]
Let $S \subseteq \Omega \subseteq \mathbb{R}^d$ be a shape and let
$f_S : S \to \mathbb{R}$ be a scalar field defined on $S$. Denote by
$f_{S^*} : \overline{B_r} \to \mathbb{R}$ its PF-T to the unit ball
$B_r \subset \mathbb{R}^d$.

\begin{itemize}
\item \textbf{Fourier representation ($d=2$).}
For a fixed radius $r \in (0,1)$, the restriction
$f_{S^*}(r,\cdot)$ is a function on the circle of radius $r$ centered at the origin ($\partial B_r$ for $r=1$)
with Fourier coefficients
\[
c_k(r)
:=
\frac{1}{2\pi}
\int_0^{2\pi}
f_{S^*}(r,\theta)\,
e^{-ik\theta}\, \mathrm{d}\theta,
\qquad k \in \mathbb{Z}.
\]

Given a truncation order $N_F \in \mathbb{N}$, define
\[
c(r)
=
\bigl(c_k(r)\bigr)_{0\le k < N_F}
\in \mathbb{C}^{N_F}.
\]

The normalized Fourier descriptor at radius $r$ is then
\[
\mathbf{c}^{F}_{S^*}(r)
:=
\frac{
\bigl(|c_k(r)|\bigr)_{0\le k < N_F}
}{
\sum_{k=0}^{N_F-1}|c_k(r)|
}
\in \mathbb{R}^{N_F}.
\]

\item \textbf{Spherical harmonics representation ($d=3$).}

For a fixed radius $r \in (0,1)$, the restriction
$f_{S^*}(r,\cdot)$ is a function on the sphere of radius $r$ centered at the origin ($\partial B_r$ for $r =1$).
Its spherical harmonic expansion is
\[ f_{S^*}(r,\theta,\phi) \approx \sum_{\ell=0}^{L} \sum_{m=-\ell}^{\ell} c_{\ell,m}(r)\, Y_\ell^m(\theta,\phi), \]
where $L$ is the maximum spherical harmonic degree and $c_{\ell,m}(r)$ are the corresponding spherical harmonic coefficients.
For each degree $\ell$, we define the spectral energy \[ E_\ell(r) := \sum_{m=-\ell}^{\ell} |c_{\ell,m}(r)|^2. \]
The spherical harmonic descriptor at radius $r$ is defined as \[ \mathbf{c}^{H}_{S^*}(r) := \bigl(E_\ell(r)\bigr)_{\ell=0}^{L} \in \mathbb{R}^{L+1}. \]

\end{itemize}
The \emph{spectral push-forward transform} $f_{S^*}\to \mathbf{c}_d$, is defined by
\[
\mathbf{c}_d
=
\begin{cases}
\mathbf{c}^{F}_{S^*}, & d=2,\\
\mathbf{c}^{H}_{S^*}, & d=3.
\end{cases}
\]
\end{definition}

The following result establishes the invariance to rotation and reflection of the spectral PF-T.

\begin{proposition}\label{prop:invariance}
Let $S \subseteq \Omega \subseteq \mathbb{R}^d$ be a shape,
$f_S : S \to \mathbb{R}$ a scalar field, and
$f_{S^*} : \overline{B_r} \to \mathbb{R}$ its PF-T representation.

For any rotation or reflection $\mathbb{K}\in O(d)$, define the transformed field
\[
f_{S_{\mathbb{K}}^*}(x)
:=
f_{S^*}(\mathbb{K}x).
\]

Then the spectral transform $f_{S^*}\to \mathbf{c}_d$ is invariant, i.e.,
\[
\mathbf{c}_d(f_{S_{\mathbb{K}}^*})(r)
=
\mathbf{c}_d(f_{S^*})(r),
\qquad
\forall r\in(0,1].
\]

In words:
\begin{itemize}
    \item for $d=2$, the normalized Fourier descriptor
    $\mathbf{c}^{F}_{S^*}(r)$ is invariant under planar rotations and reflections;

    \item for $d=3$, the spherical harmonic descriptor
    $\mathbf{c}^{H}_{S^*}(r)$ is invariant under rotations and reflections.
\end{itemize}
\end{proposition}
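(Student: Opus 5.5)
The plan is to work one radius $r\in(0,1]$ at a time. Any $\mathbb{K}\in O(d)$ preserves norms, so it maps the sphere of radius $r$ to itself. The restriction of $f_{S_{\mathbb{K}}^*}$ to that sphere is therefore just $f_{S^*}(r,\cdot)$ precomposed with an orthogonal change of angular variable. Invariance of $\mathbf{c}_d$ then reduces to showing that the chosen spectral quantities are unchanged under such a change of variable on the circle or sphere. I treat $d=2$ and $d=3$ separately.

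\textbf{The case $d=2$.} Every element of $O(2)$ acts on the angle either as a rotation $\theta\mapsto\theta+\alpha$ or as a reflection $\theta\mapsto\alpha-\theta$. For a rotation, the substitution $\theta'=\theta+\alpha$ in the integral defining $c_k(r)$, together with $2\pi$-periodicity, gives
\[
c_k\bigl(f_{S_{\mathbb{K}}^*}\bigr)(r)=e^{ik\alpha}\,c_k\bigl(f_{S^*}\bigr)(r).
\]
For a reflection, the substitution $\theta'=\alpha-\theta$ gives
\[
c_k\bigl(f_{S_{\mathbb{K}}^*}\bigr)(r)=e^{-ik\alpha}\,c_{-k}\bigl(f_{S^*}\bigr)(r).
\]
Since $f_{S^*}$ is real-valued, $c_{-k}=\overline{c_k}$, so in both cases $|c_k|$ is unchanged for every $k\ge 0$. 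The normalized descriptor $\mathbf{c}^F_{S^*}(r)$ is built only from the moduli $|c_k(r)|$, $0\le k<N_F$, and their sum, so it is invariant.

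Two technical points need checking here. First, the normalizing denominator must be nonzero; it is invariant anyway, and the degenerate case $f_{S^*}(r,\cdot)\equiv 0$ can be excluded or handled by convention. Second, the reflection case really does rely on $f$ being real-valued, and I will state this explicitly.

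\textbf{The case $d=3$.} For each degree $\ell$, the degree-$\ell$ spherical harmonics span an $O(3)$-invariant subspace $\mathcal{H}_\ell$ of $L^2(S^2)$. For $g\in O(3)$, the operator $(T_g h)(x)=h(gx)$ is unitary on $L^2(S^2)$, since surface measure is $O(3)$-invariant. Unitarity means $T_g$ maps $\mathcal{H}_\ell$ to itself (in particular, reflections act on $\mathcal{H}_\ell$ as $\pm$ a rotation), and hence acts on the coefficient vector $(c_{\ell,m})_{m=-\ell}^{\ell}$ by a unitary matrix, namely a Wigner $D$-matrix, possibly times $(-1)^\ell$. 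The energy
\[
E_\ell(r)=\sum_{m=-\ell}^{\ell}|c_{\ell,m}(r)|^2=\bigl\|P_{\mathcal{H}_\ell}f_{S^*}(r,\cdot)\bigr\|_{L^2(S^2)}^2
\]
is the squared norm of the orthogonal projection onto $\mathcal{H}_\ell$. Because $T_g$ is unitary and commutes with $P_{\mathcal{H}_\ell}$, this norm is preserved, which gives invariance of $\mathbf{c}^H_{S^*}(r)$.

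\textbf{Main obstacle.} I expect the $d=3$ step to be the main obstacle, specifically justifying that $O(3)$ preserves each $\mathcal{H}_\ell$. I would argue via harmonic homogeneous polynomials: the Laplacian commutes with orthogonal maps, and orthogonal maps preserve homogeneity degree, so the space of harmonic homogeneous polynomials of degree $\ell$, restricted to $S^2$, is mapped to itself. A further remark is needed about the truncation at degree $L$: the energies $E_\ell$ are exact projections, so no truncation error enters. Finally, for radii $r<1$, the same argument applies to the rescaled sphere.
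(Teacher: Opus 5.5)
Your proposal is correct and follows essentially the same route as the paper. For $d=2$, substitution in the Fourier integral gives a unimodular phase times $c_{\pm k}$, and real-valuedness ($c_{-k}=\overline{c_k}$) handles reflections. For $d=3$, $\mathbb{K}$ acts on each degree-$\ell$ block of coefficients by a unitary matrix, so $E_\ell$ is preserved.

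The only difference is that you justify the $O(3)$-invariance of each $\mathcal{H}_\ell$ (via harmonic homogeneous polynomials) and the unitarity (via invariance of surface measure), whereas the paper simply asserts both through the matrices $D^\ell(\mathbb{K})$. Note that this invariance comes from your polynomial argument, not from unitarity as your phrase ``unitarity means $T_g$ maps $\mathcal{H}_\ell$ to itself'' suggests.
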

\noindent The proof is given in Appendix \ref{app:proposition}.

\subsection{Finite-Dimensional PF-T}

In practice, we compute a finite-dimensional approximation of the PF-T when implementing it algorithmically. We construct this finite-dimensional approximation in a polynomial spectral space to preserve the continuity and differentiability of the PF-T.

Let $\Omega := (-1,1)^d$ denote the $d$-dimensional hypercube. The polynomial spaces $\Pi_{n}(\Omega) \subseteq C^{\infty}(\Omega)$ of $l^\infty$-degree $n \in \mathbb{N}$ are then defined as:

\begin{definition}[Polynomial Space]\label{def:pol}
    Let $\{x^\alpha\}_{\alpha \in A_{n,d}}$ be the canonical basis $x^\alpha:= \prod\limits_{i=1}^d x_i^{\alpha_i}$ for all $\alpha\in A_{n,d}$, where $A_{n,d}$ is the multi-index set $A_{n,d}:=\{\alpha \in \mathbb{N}^d:\|\alpha\|_{\infty}\leq n\}$ with $|A_{n,d}| = (n+1)^d$. A \emph{polynomial space of degree $n\in \mathbb{N}$} is defined as the span of the canonical basis $\Pi_{n}:=\mathrm{span}\{x^\alpha\}_{\alpha\in A_{n,d}}$. We denote by $\Pi_{n}(U):=\{Q|_{U}:Q\in \Pi_{n}\}$ the restriction of the polynomial space $\Pi_n$ to a domain $U\subseteq \mathbb{R}^d$.
\end{definition}

In curved domains, we consider finite-dimensional spectral spaces generated by either Fourier modes ($d=2$) or spherical harmonics ($d=3$):

\begin{definition}[Finite-Dimensional Spectral Space]
For $d=2$, let
\[
A_n
:=
\{k\in\mathbb{Z} : |k|\le n\},
\]
and define the space of trigonometric polynomials of order $n\in\mathbb{N}$ by
\[
\mathcal{T}_n^{2}
:=
\mathrm{span}
\left\{
e^{ik\theta}
\right\}_{k\in A_n}.
\]

For $d=3$, let
\[
\Lambda_n
:=
\left\{
(\ell,m)
\,:\,
0\le \ell \le n,
\;
-\ell\le m\le \ell
\right\},
\]
and define the spherical harmonic space of degree $n$ by
\[
\mathcal{T}_n^{3}
:=
\mathrm{span}
\left\{
Y_\ell^m(\theta,\phi)
\right\}_{(\ell,m)\in\Lambda_n},
\]
where $Y_\ell^m$ denotes the spherical harmonic of degree $\ell$ and order $m$.

We then define the \emph{finite-dimensional spectral space}
\[
\mathcal{T}_n
:=
\begin{cases}
\mathcal{T}_n^{2}, & d=2,\\
\mathcal{T}_n^{3}, & d=3.
\end{cases}
\]

For a domain $U\subseteq \mathbb{R}^d$, we denote by
\[
\mathcal{T}_n(U)
:=
\{f|_U : f\in\mathcal{T}_n\}
\]
its restriction to $U$.
\end{definition}

These continuous objects are computationally represented as polynomial surrogates :

\begin{definition}[Polynomial Surrogates]\label{rem:poly_surrogate}
A \emph{polynomial surrogate} of a function $f:U\to\mathbb{R}^k$ is an approximation $f_\xi\in \Pi_{n},\, \mathcal{T}_n$, obtained by minimizing the discrete residual of the equation satisfied by $f$ over a set of collocation points, typically the Legendre grid~\cite{Neg_Sob_cardona,SuarezCardona_2023}.
\end{definition}

Since $f_\xi \in C^\infty(\Omega)$, its derivatives are computed exactly through polynomial differentiation matrices, yielding a smooth and differentiable approximation.
We further define the finite-dimensional boundary deformation map :

\begin{definition}[Finite-Dimensional Boundary Deformation Map]
Let
$f:S\to\mathbb{R}$
be a scalar field defined over a shape
$S\subseteq \Omega:=(-1,1)^d$.
We define the \emph{finite-dimensional boundary deformation map}
\[
\psi_\xi
:= \mathrm{Id}+\nu_\xi,
\]
where
\(
\nu_\xi\in \mathcal{T}_n(\partial B_r)^d
\)
is the solution of
\[
\inf_{\nu_\xi\in\mathcal{T}_n(\partial B_r)^d}
\frac{1}{N_S}
\sum_{i=1}^{N_S}
\left|
x_i
+
%\nu_\xi(x_i)\eta_r(x_i)
\nu_\xi(x_i)%\eta(x_i)
-
x_S(x_i)
\right|^2.
\]

Here,
\(
\{x_i\}_{i=1}^{N_S}\subseteq \partial B_r
\)
are sampled boundary points,
$x_S(x_i)\in\partial S$ denotes the corresponding target boundary point.
\end{definition}

Using Definition~\ref{def:harmonic_ext} and the non-degeneracy condition from~Eq.~\eqref{eq:non_deg}, we define the finite-dimensional radial and harmonic extensions of $\psi_\xi$:

\begin{definition}[Finite-Dimensional Extension]\label{def:FD_ext}
    Let $\partial S\subseteq \Omega$ be a shape and $\psi_\xi\in \mathcal{T}_n(\partial B_r)^d$ a finite-dimensional boundary deformation map. Then, we define the \emph{finite-dimensional extension} $\Psi_{\xi}$ as either one of:
    \begin{itemize}
        \item \textbf{Finite-Dimensional Radial Extension}  %\Psi_{\xi}\in \mathcal{T}_n(B_r)$ 
        in spherical coordinates, as
        \begin{equation}
            \Psi_{\xi}(r,\theta) := r\psi_\xi(\theta)\, .
        \end{equation}
        
        \item \textbf{Finite-Dimensional Harmonic Extension} $\Psi_{\xi}\in (\Pi_n(B_r))^d$ as the polynomial surrogate solving 
        \begin{equation}
            \inf\limits_{\Psi_{\xi}\in (\Pi_n(B_r))^d} \mathcal{L}^n_\Delta[\xi]+\mathcal{L}^n_b[\xi]+\mathcal{L}^n_{r}[\xi],
        \end{equation}
        with 
        \begin{align}
             &\mathcal{L}^n_\Delta[\xi]:=\sum\limits_{\alpha\in A_{n,d}}\left(\sum_{i=1}^d( \Delta\Psi_{\xi_{x_i}}(P_n))^2_\alpha \right)\omega_\alpha\\&
             \mathcal{L}^n_b[\xi]:= \frac{1}{N_S}\sum_{i=1}^{N_S}\sum_{j=1}^d(\Psi_{\xi_{x_j}}(x_i)-(\psi_{\xi}(x_i))_{x_j})^2,
             \\& \mathcal{L}^n_{r}[\xi] := -\frac{\lambda}{N_S}\sum_{i=1}^{N_S} \log\bigl(\det\bigl(D\Psi_{\xi}(x_i)\bigr)\bigr),
        \end{align}
    \end{itemize}
where $P_n\subseteq \Omega$ is the Legendre grid and $\lambda \in\mathbb{R}_+$ a regularization parameter.
\end{definition}
\noindent The terms $\mathcal{L}^n_\Delta[\xi]$ and $\mathcal{L}^n_b[\xi]$ ensure that the result is a valid harmonic extension according to Def.~\ref{def:harmonic_ext}, and the term $\mathcal{L}^n_{r}[\xi]$ enforces a diffeomorphism according to Theorem~\ref{thm:diffeomorphism_extensions}.
The finite-dimensional PF-T can be applied to any scalar-valued function defined over the domain of a shape. A particularly useful choice for describing the geometry of the shape is the signed distance function (SDF) of that shape. SDFs provide a continuous implicit representation of geometry, encode both boundary and interior structure, and expose singular structures related to the medial axis~\cite{Siddiqi1999}.
\section{Signed Distance Functions (SDF)}\label{sec:signed_distance_function}

We describe in detail the application of the PF-T to SDFs for shape characterization. This provides continuous shape features that are invariant under the Euclidean group. We first recall the mathematical formulation of SDFs, their connection to the Eikonal equation, and their role in encoding topological structures such as the medial axis of a shape.

We start by defining the signed-distance function of a shape $\partial S\subseteq \Omega$ as the distance of any point in $\Omega$ to the closest point on $\partial S$ with the sign distinguishing the inside and outside of the closed shape:

\begin{definition}[Signed-Distance Function]\label{def:SDF}
The \emph{Signed Distance Function (SDF)} of the shape
$\partial S \subseteq \Omega$, denoted $\phi_S : \Omega \to \mathbb{R}$, is
\begin{equation}
\phi_S(x) := \sigma(x)\,\min_{y\in\partial S} \|x - y\|_2,\,\,
\sigma(x) = \begin{cases} +1, & x\in S,\\ -1, & x\in S^c.\end{cases}
\end{equation}
\end{definition}

The SDF $\phi_S$ is a level-set function according to Definition~\ref{def:level_set}. Figure~\ref{fig:sublevelsets_sketch} illustrates the zero, superlevel, and sublevel sets of the signed distance function for a star-convex shape.

\begin{figure}[!t]
\centering
\includegraphics[width=2.5in]{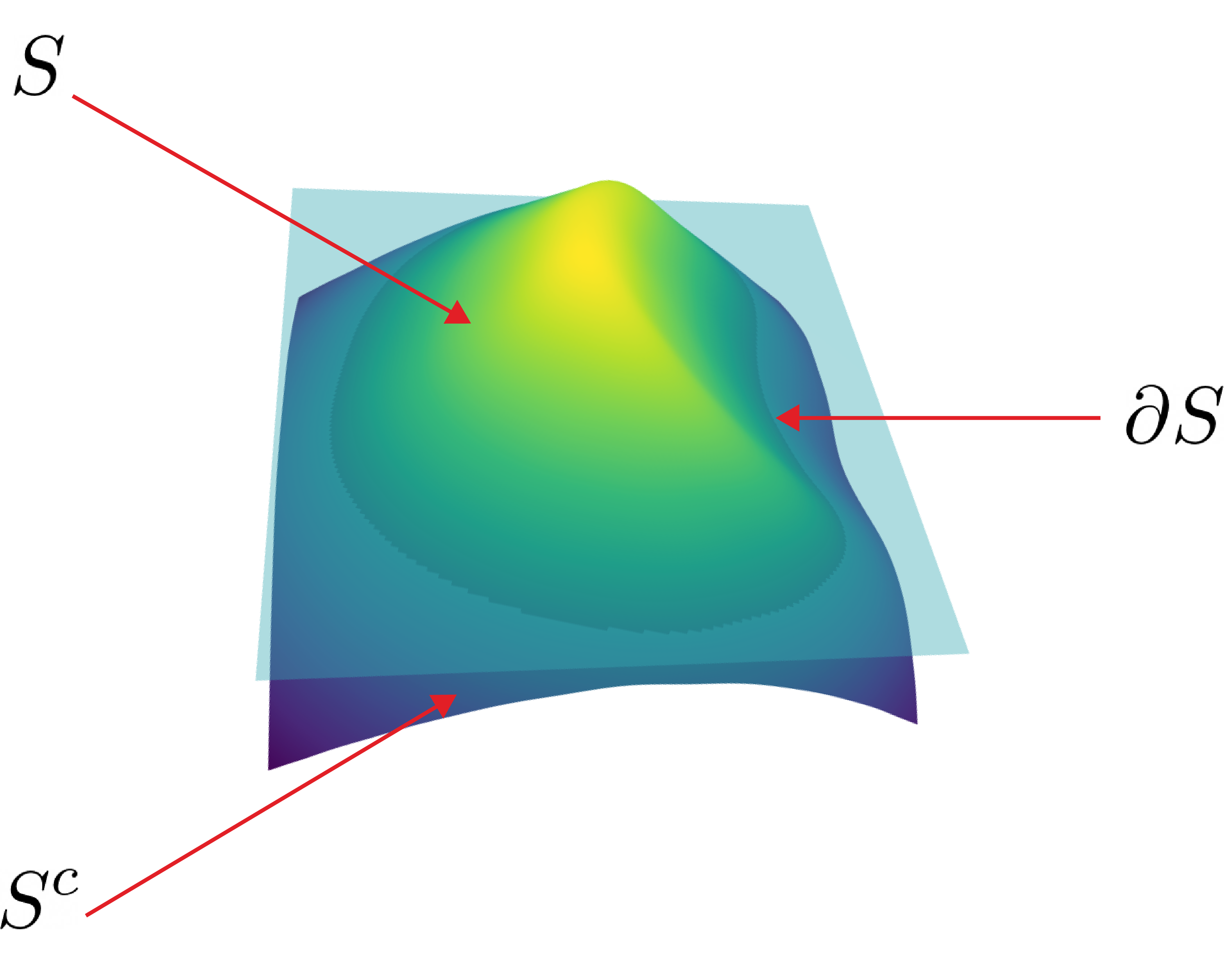}
\caption{Example of the signed distance function of a star-convex shape, showing the zero level-set $\partial S$ indicated by the cutting plane, the superlevel-set $S$ (interior points, positive sign) in lighter colors, and the sublevel-set $S^c$ (exterior points, negative sign) in darker colors.}
\label{fig:sublevelsets_sketch}
\end{figure}

Following Refs.~\cite{Sakai1996-kl,Osher2003-ky}, the SDF $\phi_S$ of a shape $\partial S$ can be determined as the viscosity
solution~\cite{Crandall1983-yo} of the Eikonal equation:

\begin{definition}[Viscous Eikonal Equation]\label{def:eik_loss}
Let $\partial S \subseteq \Omega$ be a given shape. The \emph{viscous Eikonal equation} seeks a function $\phi_S^\mu \in C^2(\Omega)$ satisfying
\begin{equation}
    \left\{
    \begin{aligned}
        &|\nabla \phi_S^\mu|^2 - \mu \Delta \phi_S^\mu = 1 \quad \text{in } \Omega,\\
        &\phi_S^\mu = 0 \quad \text{on } \partial S.
    \end{aligned}
    \right.
\end{equation}

The variational viscous Eikonal problem is to find $\phi \in H^2(\Omega)$ minimizing
\begin{equation}
\begin{aligned}
    \inf_{\phi \in H^2(\Omega)} &
    \int_{\Omega} \left(|\nabla \phi|^2 - \mu \Delta \phi - 1\right)^2 \, \mathrm{d}x\\&
      + \int_{\partial S} (\phi|_{\partial S})^2 \, \mathrm{d}\mathcal{H}(x),
      \label{eq:varEik}
\end{aligned}
\end{equation}
with $\mu\in \mathbb{R}_{+}$ the viscosity parameter. 
\end{definition}
Figure~\ref{fig:sdf_example} shows an example of a binary mask of a two-dimensional shape, the SDF obtained by solving the viscous Eikonal equation, and a 3D visualization of the SDF with zero-level contours.

In practice we solve the variation form in Eq.~\eqref{eq:varEik} to obtain a globally smooth and differentiable approximation of the SDF. We use polynomial surrogates~\cite{Neg_Sob_cardona} and Sobolev Cubatures~\cite{SuarezCardona_2023,cardona2025variationalframeworkalgorithmiccomplexity} to compute the finite-dimensional polynomial SDF:

\begin{definition}[Polynomial Signed-Distance Function]\label{def:polySDF}
    Let $\partial S \subseteq \Omega:=(-1,1)^d$ be a shape. We define the \emph{polynomial SDF} as the polynomial surrogate $\phi_\xi\in \Pi_n(\Omega)$ solving
    \begin{equation}
    \begin{aligned}
  \inf\limits_{\phi_\xi\in \Pi_n(\Omega)}&\sum\limits_{\alpha\in A_{n,d}}\left( \|\mathbb{D}_\nabla \phi_\xi(P_n)\|_2^2-1-\mu\mathbb{D}_\Delta\phi_\xi(P_n)\right)^2\omega_\alpha\\&
    +\frac{1}{N_s}\sum_{i=1}^{N_s}\phi_\xi(x_i)^2,
            \end{aligned}
            \end{equation}  
where $\mathbb{D}_\nabla,\mathbb{D}_\Delta\in \mathbb{R}^{|A_{n,d}|}$ are the polynomial differentiation matrices for the $\nabla$ (Nabla) and $\Delta$ (Laplace) operators, respectively.
\end{definition}

From this, we can define the finite-dimensional PF-SDF:

\begin{definition}[Finite-Dimensional Push-Forward Signed Distance Function (PF-SDF)]
The \emph{finite-dimensional PF-SDF} $\phi_{\xi^*} : B_r \to \mathbb{R}$ is obtained by composing a polynomial SDF $\phi_\xi$ with the harmonic or radial extension $\Psi_\xi$ (see Definition~\ref{def:FD_ext}). More precisely, we define
\begin{equation}
    \phi_{\xi^*}(x) := \phi_{\xi}(\Psi_{\xi}(x)), 
    \qquad \forall x \in B_r,
\end{equation}
where $\Psi_{\xi} : \overline{B_r} \to \overline{S}$ is either the radial Fourier or the harmonic polynomial extension map.
\end{definition}

\subsection{Medial axis of a shape}\label{subsec:medial_axis_structure}

The solution of the inviscid Eikonal equation is Lipschitz continuous but not differentiable. In particular, there exists a set of measure zero where the derivative of the solution $\phi_S$ is discontinuous. This singular set is closely related to the medial axis of the shape $\partial S$ \cite{Xia2009, Chazal2005}, which has been shown to be a powerful geometric descriptor of shape similarity \cite{Torsello2004}.

The viscous Eikonal equation (Def.~\ref{def:eik_loss}), which we solve here, has smooth solutions $\phi_S^\mu \in C^2(\Omega)$ throughout the domain. Nevertheless, for small viscosity $\mu$, the solution retains a diffuse signature of the singular medial set, allowing the medial axis and related geometric features to be approximated. This is illustrated in Fig.~\ref{fig:skeleton_sdf_star}, where the gradients and divergence of the viscous SDF highlight the skeletal structure of the given shape, as also previously demonstrated \cite{Siddiqi1999}. For comparison, we show the \texttt{scikit-image} skeletonization result~\cite{skimage} for the same example shape. 
The information about the medial axis of a shape is contained in its diffuse signature in the polynomial SDF.

\begin{figure}[!t]
\centering
\includegraphics[width=3.5 in]{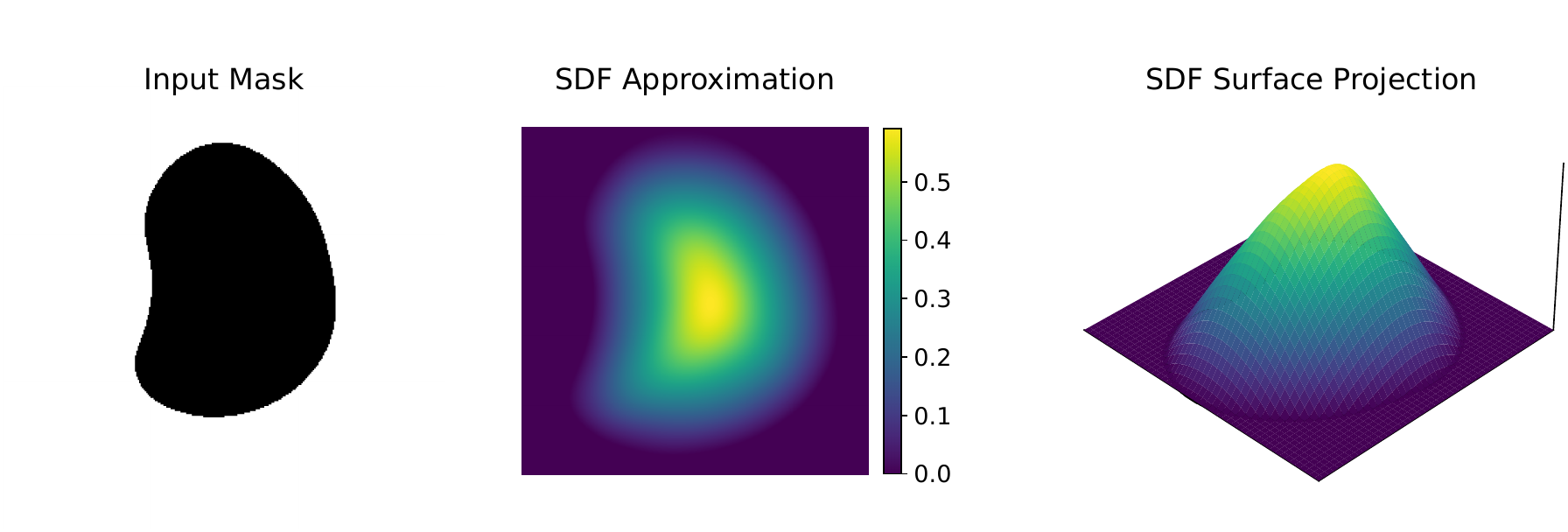}
\caption{From left to right: Input binary mask of a star-convex shape, signed distance function (SDF) approximation obtained by solving the viscous Eikonal equation, and 3D visualization of the SDF with zero-level contour highlighted in black, representing the reconstructed shape boundary.}
\label{fig:sdf_example}
\end{figure}

We start by defining the medial axis of a shape as the set of points inside the shape that are equidistant to at least two points on the shape \cite{Chazal2005}:

\begin{definition}[Medial Axis]
Let $\partial S \subseteq \Omega$ be a shape and $\phi_S : S \to \mathbb{R}$ its SDF. The medial axis (or topological skeleton) is defined as
\begin{align*}
M_S := \{& x \in S : \exists\, x_l, x_r \in \partial S:x_l\neq x_r,
\\& \|x - x_l\|_2 = \|x - x_r\|_2 =\phi_S(x)\}.
\end{align*}
\end{definition}

\begin{lemma}\label{lemma:sing_sdf}
Let $S \subseteq \Omega$ and $\phi_S$ its SDF. Then, $\phi_S$ is not differentiable at any point $c \in M_S$.
\end{lemma}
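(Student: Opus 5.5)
We argue by contradiction, comparing $\phi_S$ with the distance to each of the two boundary points in the definition of $M_S$. Fix $c\in M_S$ with $x_l\neq x_r\in\partial S$ and $\|c-x_l\|_2=\|c-x_r\|_2=\phi_S(c)=:\rho>0$; here $\rho>0$ because $c\in S$ is open and at positive distance from $\partial S$. Since $\phi_S(x)=\min_{y\in\partial S}\|x-y\|_2$ on $S$, we have for each $j\in\{l,r\}$ and all $x$ near $c$
\[
\phi_S(x)\le g_j(x):=\|x-x_j\|_2,\qquad \phi_S(c)=g_j(c).
\]
Each $g_j$ is smooth near $c$ because $c\neq x_j$, and its gradient there is the unit vector $u_j:=(c-x_j)/\rho$. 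The key point is that $x_l\neq x_r$ at the common distance $\rho$ forces $u_l\neq u_r$.

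Now suppose $\phi_S$ were differentiable at $c$ with gradient $p$. The function $g_j-\phi_S$ is then differentiable at $c$, is $\ge 0$ nearby, and vanishes at $c$. So $c$ is a local minimum and $\nabla(g_j-\phi_S)(c)=0$, i.e.\ $p=u_j$ for both $j=l$ and $j=r$. This gives $u_l=u_r$, contradicting the key point above. Hence $\phi_S$ is not differentiable at $c$.

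As an alternative check, one can use directional derivatives along the segments. Moving from $c$ toward $x_j$, i.e.\ in direction $-u_j$, gives $\phi_S(c-tu_j)\le\rho-t$. Combined with $1$-Lipschitzness, this yields $\phi_S(c-tu_j)=\rho-t$, so the directional derivative in direction $-u_j$ is $-1$. If $\phi_S$ were differentiable, $|p|\le1$ from Lipschitzness, together with $p\cdot u_j=1$, forces $p=u_j$ for both $j$, giving the same contradiction. The first argument alone suffices, though.

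The only genuinely delicate step is the bookkeeping at the start: that $c$ lies in the open set $S$ so the sign $\sigma\equiv+1$ near $c$, and that $\rho>0$ so the $g_j$ are smooth at $c$. Both follow from $S$ being open in Definition~\ref{def:level_set} and $\partial S\cap S=\emptyset$. The rest is the elementary touching-from-above argument.
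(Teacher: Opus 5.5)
Your argument is correct, and it takes a different route from the paper. The paper approaches $c$ along the two segments toward $x_l$ and $x_r$ through the points $z_j^\delta=c-\delta u_j$. It uses compactness of $\partial S$ and a Cauchy--Schwarz computation to identify the nearest boundary point of $z_j^\delta$ with $x_j$. It then applies the formula $\nabla\phi_S(z)=(z-x(z))/\|z-x(z)\|_2$ at those points, and concludes from the fact that $\nabla\phi_S(z_l^\delta)\to u_l$ and $\nabla\phi_S(z_r^\delta)\to u_r$ have different limits.

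Your touching-from-above argument instead works at the single point $c$. Each $g_j=\|\cdot-x_j\|_2$ is a smooth upper support of $\phi_S$ at $c$, so differentiability would force $\nabla\phi_S(c)=u_j$ for every active $j$. This gives a shorter, self-contained proof. You never need differentiability of $\phi_S$ at nearby points (which the paper takes as known), uniqueness of nearest points, or compactness, and the argument works for an arbitrary closed boundary set.

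It also supplies a step the paper leaves implicit. Different limits of $\nabla\phi_S$ along two sequences only show that the gradient is discontinuous at $c$, and a differentiable function can have a discontinuous gradient in general. Turning the paper's computation into non-differentiability needs something like your directional-derivative check ($\phi_S(c-tu_j)=\rho-t$ together with $1$-Lipschitzness) or semiconcavity of the distance function.

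In exchange, the paper's route makes the two one-sided gradient limits $u_l\neq u_r$ across the medial axis explicit. That jump in the normalized gradient field is exactly what the skeleton channel $\nabla\cdot\eta$ is designed to detect.
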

\noindent The proof is given in Appendix \ref{app:lemma}.

\begin{figure}[!t]
\centering
\includegraphics[width=3.5 in]{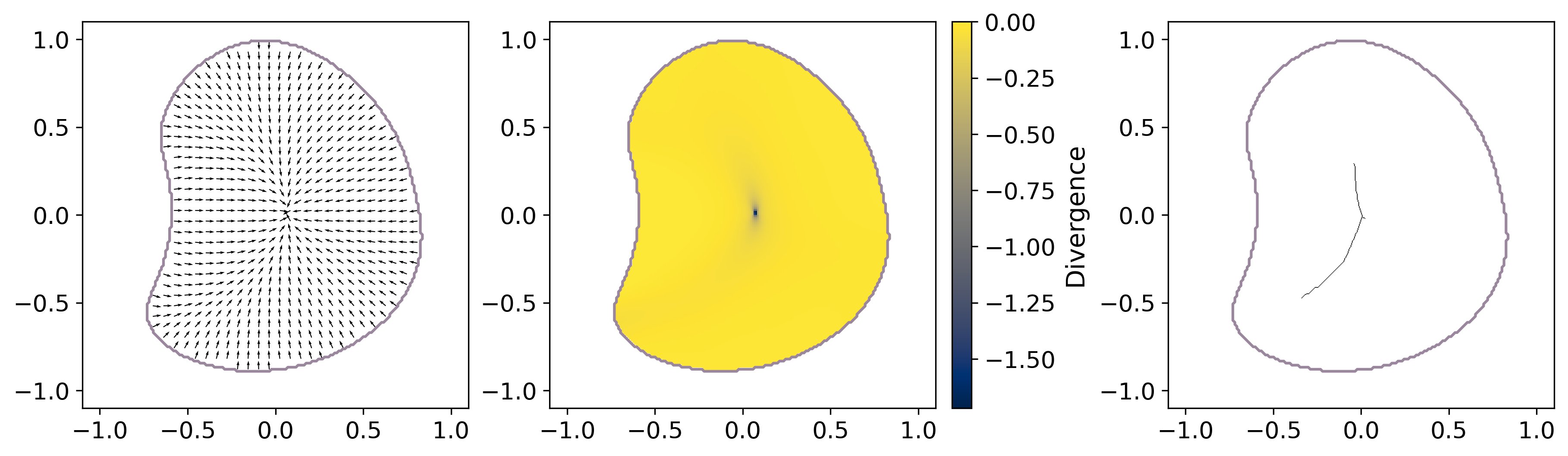}
\includegraphics[width=3.5 in]{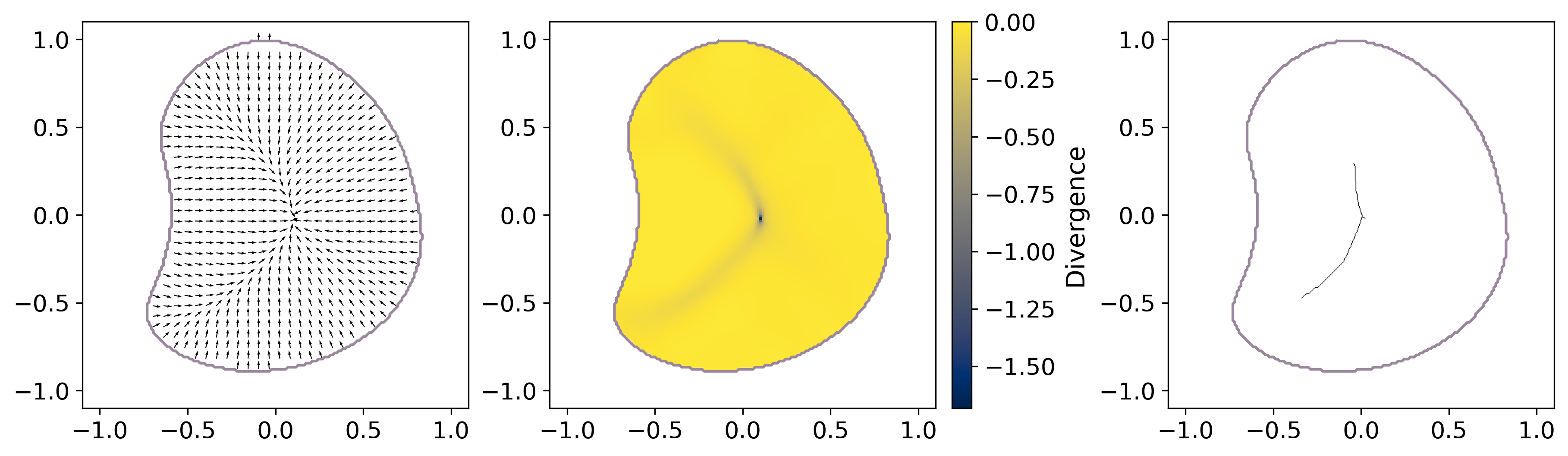}
\caption{Signature of a shape skeleton in the smooth polynomial SDF. Top row: solution from the viscous Eikonal equation with viscosity $0.6$; bottom row: viscosity $0.001$. Left to right in each row: normalized gradients (Eq.~\eqref{eq:normgrad}), divergence of the normalized gradient field, and a reference skeleton obtained using the \texttt{scikit-image} skeletonization algorithm on the original binary shape mask (identical in both rows).}
\label{fig:skeleton_sdf_star}
\end{figure}

The divergence $\nabla \cdot \eta$ of the normalized SDF gradient
\begin{equation}
    \eta(x) = \frac{\nabla \phi_\xi(x)}{\|\nabla \phi_\xi(x)\|}\, ,
    \label{eq:normgrad}
\end{equation}
is low along the medial axis of the shape. In all experiments in Section~\ref{sec:experiments}, we use $\nabla\cdot \eta$ directly as the \emph{skeleton channel} of a PF-SDM. Beyond feature extraction, the same divergence field can be post-processed into a discrete skeleton point cloud and further into a smooth parametric medial-axis curve, providing an intrinsic, landmark-free longitudinal coordinate for elongated shapes. The full extraction procedure is described in Appendix \ref{app:spine}
 and illustrated on a real biological 3D shape.

\section{Experiments and Results}\label{sec:experiments}

Since the polynomial SDF encodes both the boundary geometry of a shape and its interior medial-axis structure, it is a natural choice to combine with the PF-T. The resulting representation can compare shapes while retaining volumetric and skeletal information. We empirically evaluate the performance and robustness (to noise and shape perturbations) of the resulting representation before analyzing its discriminative power in real-world data sets. Finally, we show an application of PF-SDM to a biological problem.

\subsection{Invariance and interpretability}
\label{subsec:invariance}

We first evaluate whether the PF-SDM satisfies the invariance properties required for meaningful shape comparison and whether the resulting descriptors exhibit an interpretable structure in a controlled setting. We consider a synthetic 2D data set comprising four regular polygons (equilateral triangle, square, regular pentagon, and regular hexagon) and an approximately symmetric five-petaled flower~\cite{Gandhi2021}. All shapes are centered at the origin, scaled to be inscribed in the unit circle,
and embedded in the domain $ \Omega := (-1,1)^2 $. For each shape, we generate ten variants covering rotations, reflections, translations, uniform scaling, and additive Gaussian noise (see Appendix \ref{app:synthetic} for details), yielding a total of 50 contours.
Since every shape in this data set is star-convex, we use radial extension (Definition~\ref{def:radially_const}); harmonic extension yields comparable results, but radial extension is computationally more efficient for this shape class.

We compute full pairwise Euclidean distance matrices for all 50 shapes represented using three morphometrics:
\begin{itemize}
    \item \textbf{PF-SDM}: normalized FFT coefficients of the PF-SDF with $ N_F = 15 $ truncation order.
    \item \textbf{EFA}: elliptical Fourier descriptors with the first 15 harmonics, computed using the \href{https://pyefd.readthedocs.io/en/latest/}{\texttt{pyefd}} and normalized to remove rotation, translation, and scale.
    \item \textbf{GPA}: generalized Procrustes analysis applied to equidistant boundary landmarks sampled at equal arc-length intervals from each contour, with pairwise Procrustes distances computed between aligned shapes.
\end{itemize}

\begin{figure}[!t]
\centering
\includegraphics[width=3.5 in]{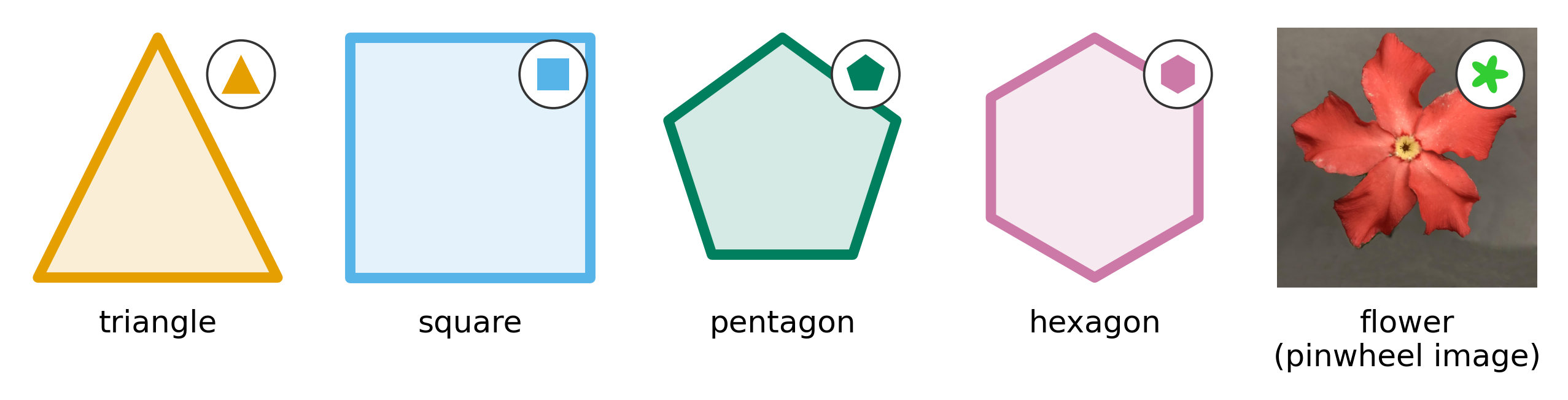}

\includegraphics[width=3.5 in]{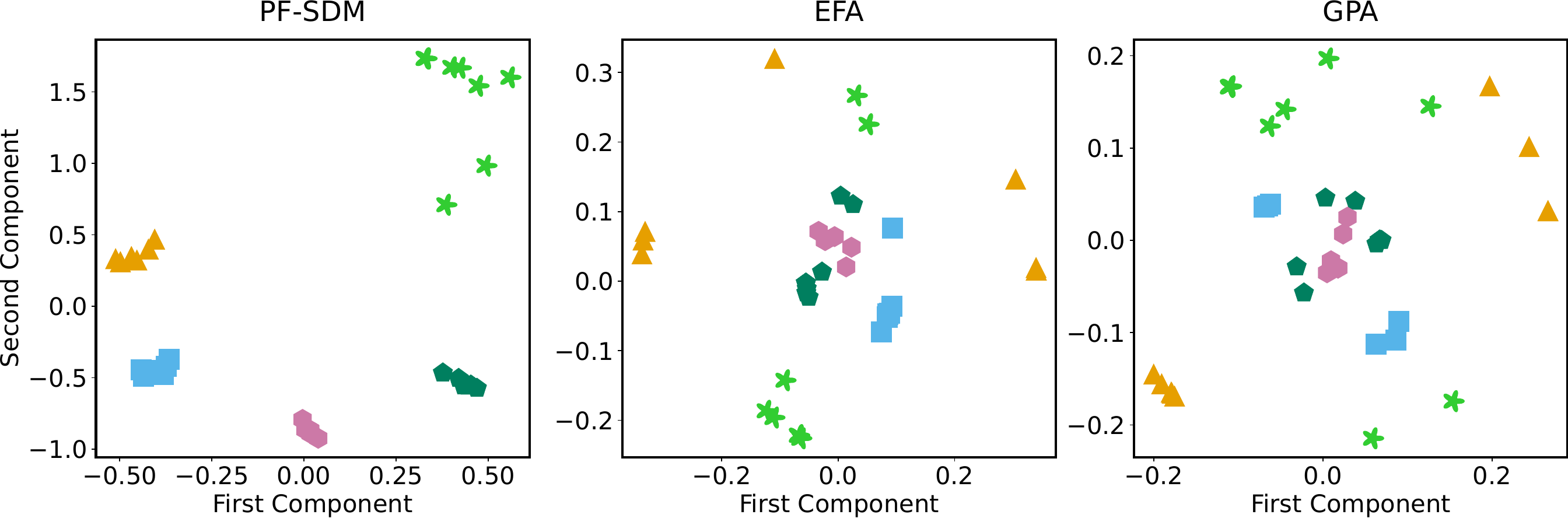}
\caption{Top: 2D synthetic data set of regular polygons and the approximately symmetric five-petaled flower with their legend markers. Bottom: MDS embeddings of Euclidean pairwise distance matrices for PF-SDM (left), EFA (center), and GPA (right) on the synthetic 2D data set. Each of the 50 points represents one of the five shape classes. Colors match the inset legend.}
\label{fig:mds_invariances}
\end{figure}

Figure~\ref{fig:mds_invariances} shows the data classes considered as well as the first two multi-dimensional scaling (MDS) components of the resulting pairwise distance matrices for each method. For PF-SDM, all augmented variants of a given shape collapse to a tight cluster, confirming invariance to the applied shape-preserving transformations and robustness to noise. In contrast, both GPA and EFA exhibit noticeably larger intra-class variance, reflecting their sensitivity to noise and residual alignment errors.
Beyond invariance, the PF-SDM embedding reveals an interpretable organization of shape classes. The irregular flower is clearly separated from the regular polygons, while the polygons themselves are arranged according to their geometric similarity: shapes that more closely approximate a circle (e.g., pentagon and hexagon) cluster closer together, whereas shapes that deviate more from circularity (e.g., triangle) are embedded farther away. The PF-SDM embedding thus reflects meaningful geometric relationships between shapes.

\subsubsection*{Shape hierarchy and rotational symmetry}
While the full PF-SDM FFT spectrum emphasizes global shape similarity, the spectral representation allows explicit control over which geometric properties are emphasized. In particular, removing the Direct Current (DC) component suppresses information related to overall circularity, while using cosine distance emphasizes relative angular variation rather than absolute magnitude. For readability, we denote by $c_k := \bigl[\mathbf{c}^F_{S^*}\bigr]_k$ the $k$-th entry of the normalized Fourier descriptor from Section~\ref{subsec:spectral pf-t}. If $c_0$ is omitted, the induced distances become primarily sensitive to rotational symmetry.

\begin{figure}[!t]
\centering

\includegraphics[width=3.5 in]{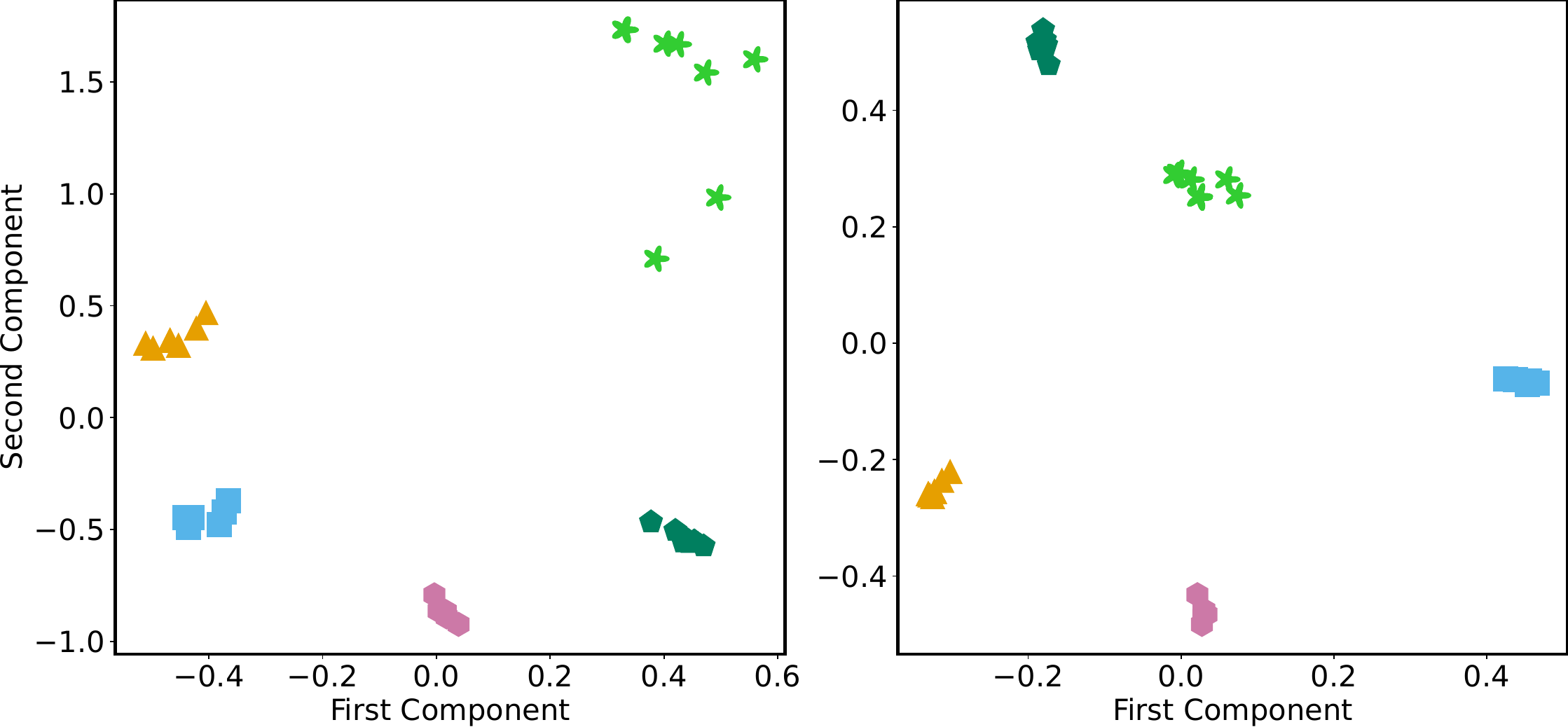}
\caption{MDS embeddings of PF-SDM distance matrices computed using the full normlized Fourier descriptors $\mathbf{c}^F_{S^*}$ with Euclidean distance (left), and using the FFT spectrum excluding $c_0$ with cosine distance (right). Excluding $c_0$ emphasizes rotational symmetry over global shape similarity.}
\label{fig:fft_rotation_symmetry}
\end{figure}

This is illustrated in Fig.~\ref{fig:fft_rotation_symmetry}. When the full FFT spectrum is used with Euclidean distance, the embedding reflects global geometric similarity. In contrast, when $c_0$ is excluded, and cosine distance is used, the embedding reorganizes according to rotational symmetry: the five-petaled flower clusters near the pentagon, reflecting their shared five-fold symmetry. A similar effect is observed for other shapes with related symmetry structure, such as the triangle and hexagon, which exhibit three- and six-fold rotational symmetry, respectively.

\begin{figure}[!t]
\centering

\includegraphics[width=3.5 in]{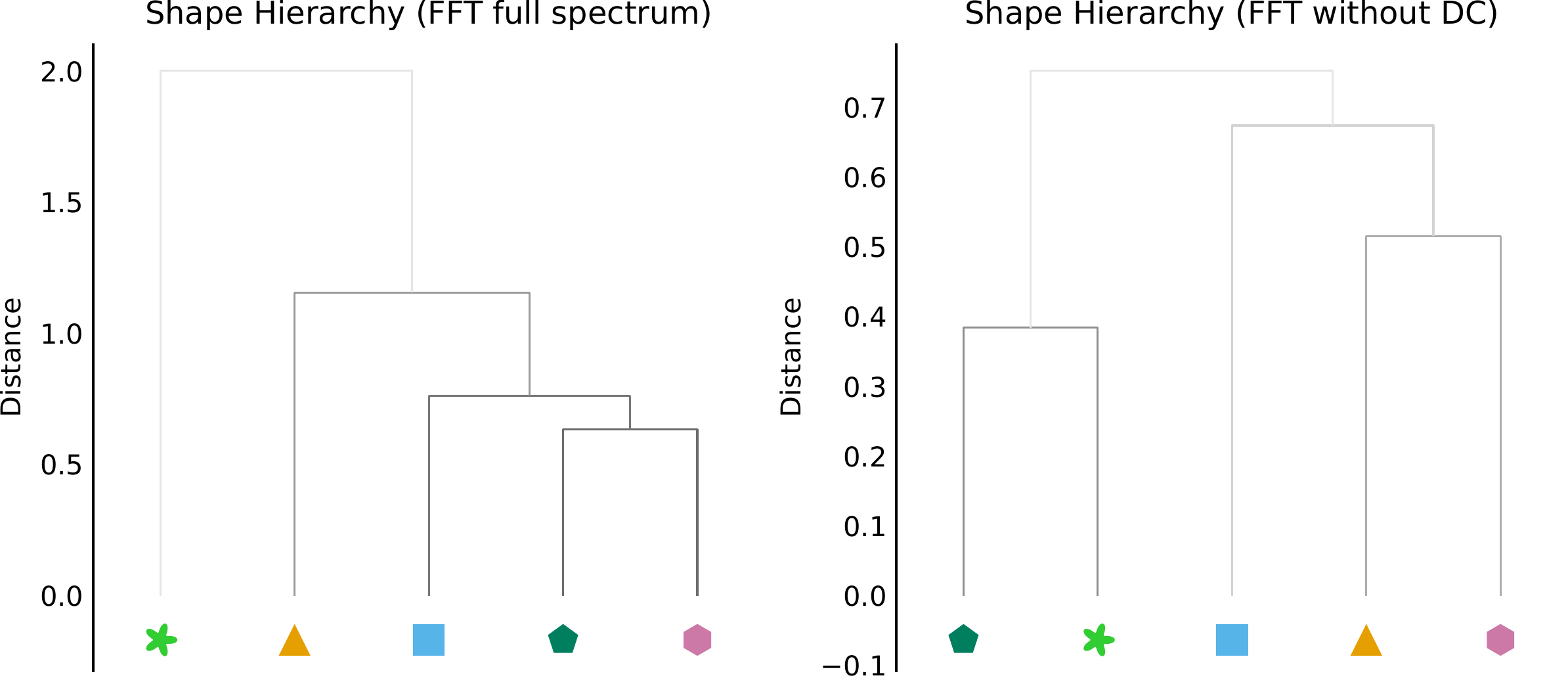}

\caption{Hierarchical clustering of shapes based on PF-SDM FFT spectra. Left: full descriptor $\mathbf{c}^F_{S^*}$, emphasizing shape similarity. Right: excluding $c_0$, emphasizing rotational symmetry groups.}
\label{fig:shape_dendogram}
\end{figure}

This embedding structure becomes even more evident in the hierarchical clustering shown in Fig.~\ref{fig:shape_dendogram}. Using the full descriptor, the dendrogram mainly reflects global geometry. Excluding $c_0$ reorganizes the hierarchy around symmetry order, with shapes sharing a similar rotational structure grouped closer together.

\begin{figure}[!t]
\centering

\includegraphics[width=3.5 in]{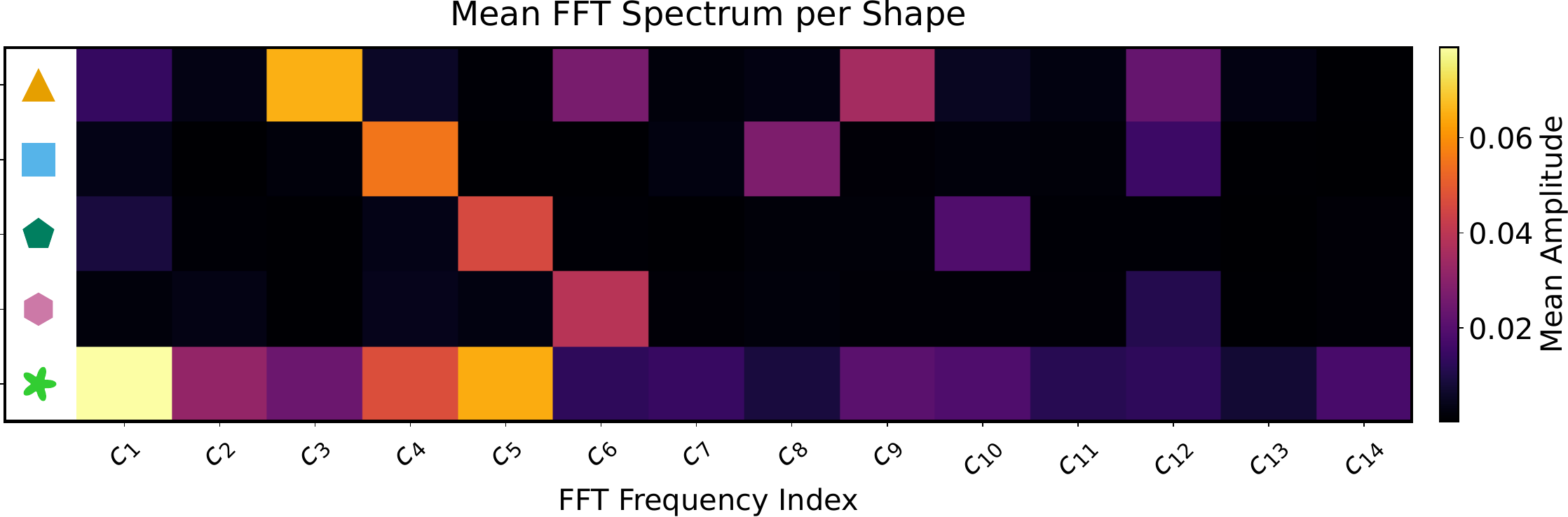}

\caption{Mean normalized Fourier descriptors per shape class, illustrating the relationship between dominant frequency components and rotational symmetry.}
\label{fig:mean_fft_spectrum}
\end{figure}

This reorganization can be explained by the mean normalized Fourier descriptors shown in Fig.~\ref{fig:mean_fft_spectrum}. For regular polygons, the $k$-th order in $c_k$ is dominant for shapes with $k$-fold symmetry (e.g., $c_3$ for the triangle, $c_4$ for the square) with higher harmonics showing up attenuated. The flower exhibits a high $c_5$ component, consistent with its five-petaled geometry. Because it is only approximately five-fold symmetric, the first harmonic $c_1$ also carries substantial energy. Yet, the fifth harmonic remains the first non-trivial local maximum in the spectrum. Similarly, the square exhibits its dominant harmonic at $c_4$, but also carries substantial energy at $c_{12}$ a harmonic it shares with the triangle ($c_3, c_6, c_9, c_{12},\ldots$) and the hexagon ($c_6, c_{12},\ldots$), since $12$ is a common multiple of $3$, $4$, and $6$. Under cosine distance on the DC-removed spectrum, this shared high-order harmonic pulls the square closer to the triangle and hexagon in the symmetry-based embedding.

These properties remain true for shapes in a three-dimensional domain $\Omega$. There, we use the unit ball $B_r \subset \mathbb{R}^3$ as reference domain and compare two complementary morphometrics:
\begin{itemize}
    \item \textbf{PF-SDM}: Spherical harmonic descriptors $\mathbf{c}_{S^*}^H$ of the PF-SDF up to expansion order $L=20$ (see Section~\ref{subsec:spectral pf-t}).
    \item \textbf{Spherical parameterization}: The PF-T itself induces a spherical parameterization of the shape. As in classical spherical-harmonic shape analysis we directly expand the PF-T restricted to the surface in spherical harmonics up to order $L=20$. 
    The resulting degree-wise power spectrum, corresponding to the spectral energy $E_\ell$ defined in Section~\ref{subsec:spectral pf-t}, is used as a shape descriptor.
\end{itemize}
These two morphometrics emphasize different aspects of shape: PF-SDM captures the interior structure in $S$, while the spherical parameterization focuses on the surface $\partial S$ and is possible because all benchmark shapes are topological spheres.

Concretely, we consider six star-convex benchmark shapes in 3D: sphere, ellipsoid, cube, cone, pyramid, and a cylindric capsule shape, each augmented by five shape-preserving transformations (translations, rotations, reflections, and uniform scaling) as described in Appendix \ref{app:synthetic}. As in the 2D case, we use radial extension for the PF-T.

\begin{figure}[!t]
\centering
\includegraphics[width=3.5 in]{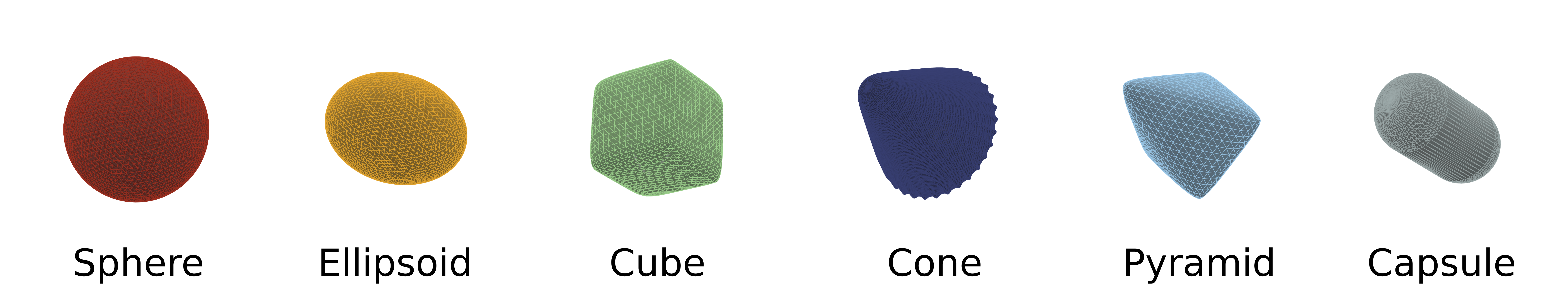}
\includegraphics[width=3.5 in]{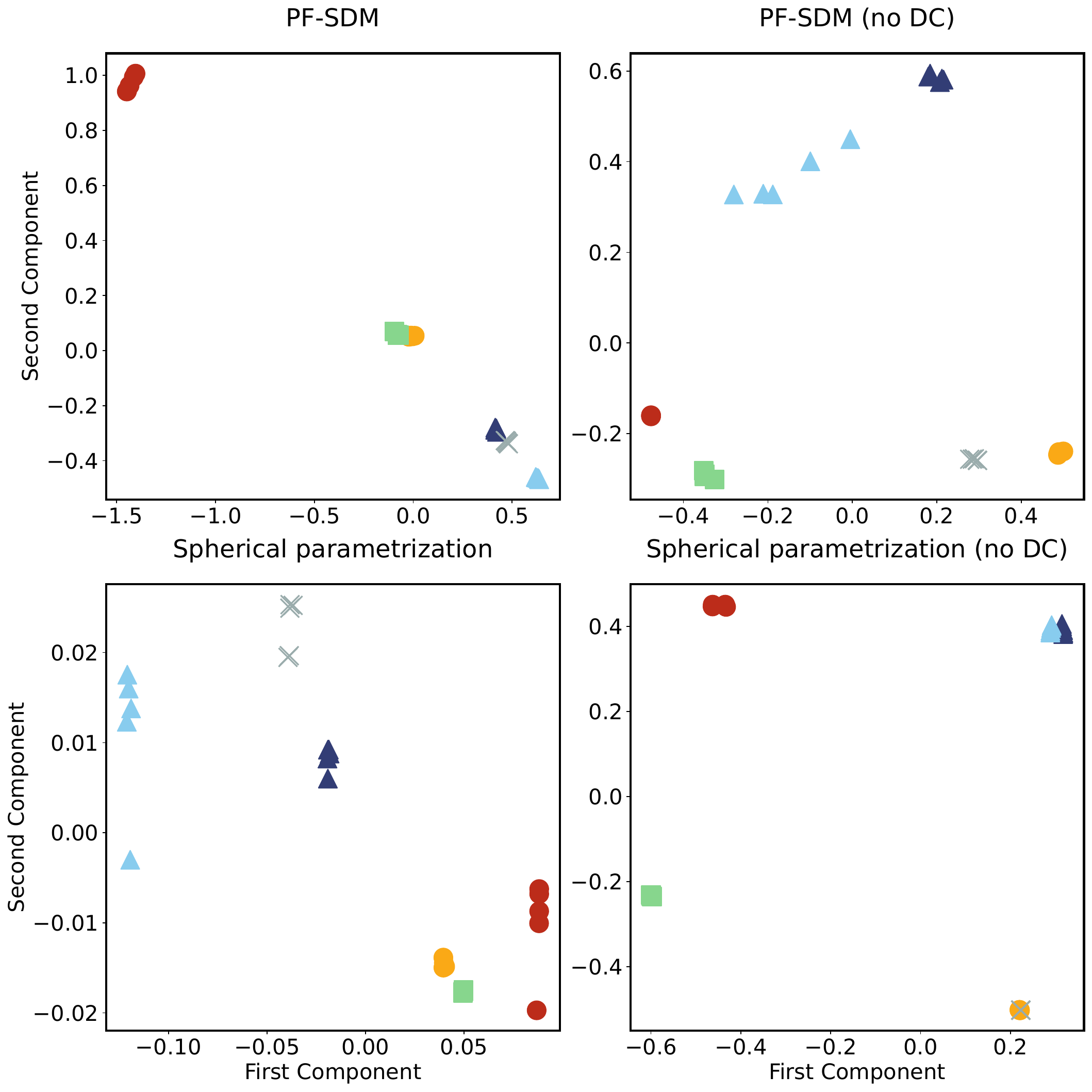}
\caption{Top: the six 3D shape classes. Bottom: MDS embeddings of pairwise distance matrices computed from PF-SDM descriptors (left) and spherical parameterization descriptors (right). Top row: full spectrum with Euclidean distance. Bottom row: spectrum excluding the DC component with cosine distance. Colors match the inset legend.}
\label{fig:3d_mds}
\end{figure}

Figure~\ref{fig:3d_mds} shows the first two MDS components of the resulting pairwise distance matrices. When using the full spectrum with Euclidean distance (left column), both morphometrics organize shapes according to global geometric similarity. In this, PF-SDM is more robust to shape-preserving transformations and noise than direct spherical harmonics expansion.

When the DC component (energy $E_0$) is omitted, and cosine distance is used (right column), the embeddings again reorganize to emphasize rotational symmetry. Then, for both morphometrics, the cone and pyramid are close, as well as the ellipsoid and capsule. 
The spherical parameterization produces a nearly perfect symmetry-based organization: the sphere is isolated, and the cone--pyramid and ellipsoid--capsule pairs collapse almost exactly onto one another. This tighter clustering is expected since the spherical parameterization only evaluates the surface, and all shapes are topological spheres. In contrast, PF-SDM also captures interior structure, which better separates different shapes while still embedding them based on symmetry, as confirmed in Sections~\ref{subsec:benchmarking} and~\ref{subsec:gastruloid}. Overall, these observations confirm the conclusions from the 2D benchmarks.

Together, these results demonstrate that the PF-SDM yields invariant, noise-robust, and interpretable shape descriptors. It also provides explicit control over the geometric features emphasized in shape comparison through simple, principled adaptation of the spectral representation.

\subsection{Discriminative power}\label{subsec:benchmarking}

We assess the discriminative power of the PF-SDM on more complex shapes from real-world computer vision data sets with annotated ground truth. Since these shapes are not generally star-convex, we use harmonic extension of the PF-T in the following experiments.
Specifically, we consider the data sets MPEG-7~\cite{mpeg7_ce_shape1}, a multi-class collection of 1\,400 complex 2D shapes across 70 classes, and BBBC010~\cite{bbbc010} from the Broad Bioimage Benchmark Collection, a binary classification task distinguishing live from dead \textit{C.~elegans} roundworms over 1\,406 individual per-worm masks. Both data sets directly provide binary segmentation masks with examples shown in Fig.~\ref{fig:benchmarking datasets}.

\begin{figure}[!t]
\centering
\includegraphics[width=3.5 in]{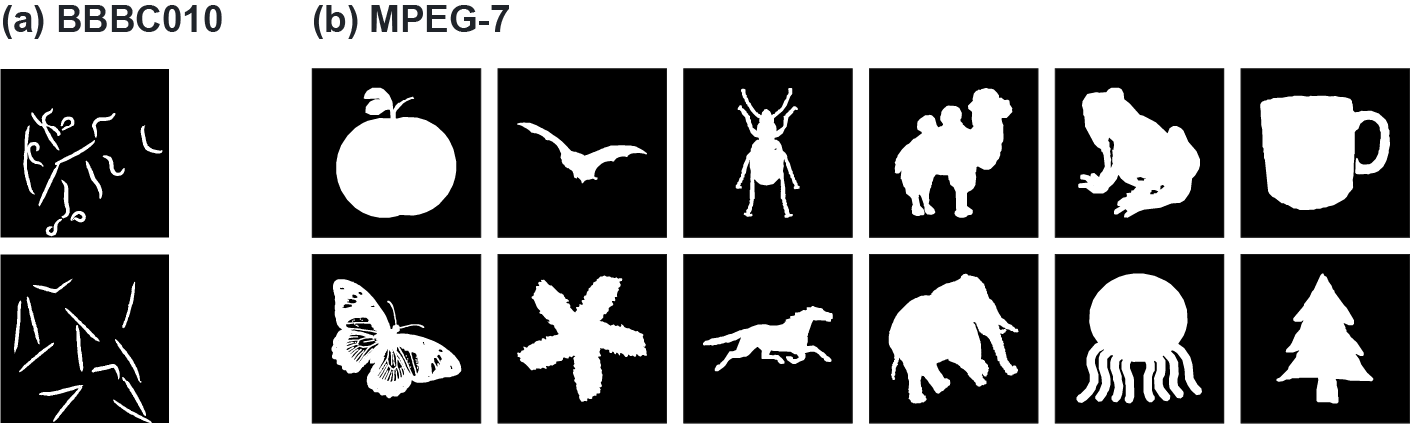}
\caption{Example binary masks from the benchmark data sets. (a) BBBC010 showing per-well masks of microscopy images of roundworms~\cite{bbbc010}: top row shows mostly alive worms and bottom row mostly dead worms. There are multiple worms per image; classification is done on a per-worm basis. (b) Ten example classes from the MPEG-7 shape-classification data set~\cite{mpeg7_ce_shape1}. There is one shape per image.}
\label{fig:benchmarking datasets}
\end{figure}

On these data sets, we compare the classification performance of PF-SDM embeddings with classic morphometric descriptors and with deep-learning shape representations taking into account different levels of information: shape, skeleton, intensity. For this, we assume that the input data, e.g.~an image, contain different \emph{channels}.

A channel is any scalar field $f_S : S \to \mathbb{R}$ defined over the  domain of a shape $\partial S$. The PF-T (Definition~\ref{def:def_map}) maps each channel to the reference domain $B_r$, and the spectral PF-T (Section~\ref{subsec:spectral pf-t}) produces a rotation- and reflection-invariant feature vector. In the experiments below, we consider three channels:
\begin{itemize}
    \item \textbf{Shape channel} ($\mathrm{SDF}$): the polynomial SDF $\phi_\xi$, encoding boundary location and signed distance to the shape $\partial S$.
    \item \textbf{Skeleton channel} ($\mathrm{Skel}$): the divergence of the normalized SDF gradient,
    $\nabla\cdot \eta$ from Eq.~\eqref{eq:normgrad}, which smoothly encodes the medial-axis structure of the shape $\partial S$ (see Section~\ref{subsec:medial_axis_structure}).
    \item \textbf{Intensity channel} ($\mathrm{Int}$): any additional scalar signal $I_S : S \to \mathbb{R}$ defined over the shape domain, such as a pixel intensity field.
\end{itemize}

We refer to descriptors obtained from individual or combined channels as:
\begin{itemize}
    \item \textbf{PF-SDM}: shape channel only.
    \item \textbf{PF-SDM\,+\,Skel}: shape and skeleton channels.
    \item \textbf{PF-SDM\,+\,Int}: shape and intensity channels.
    \item \textbf{PF-SDM\,+\,Skel\,+\,Int}: all three channels.
\end{itemize}

For each descriptor variant, we train \emph{independent} logistic-regression classifiers on the spectral features of each active channel, obtaining a per-channel class-probability vector $P_c(y\mid x)$ for
$c \in \mathcal{C} \subseteq \{\mathrm{SDF},\mathrm{Skel},\mathrm{Int}\}$.
The fused prediction is the convex combination
\begin{equation}\label{eq:late_fusion}
    P_{\mathrm{fused}}(y\mid x) := \sum_{c\in\mathcal{C}} \alpha_c\,P_c(y\mid x)\, , \quad \sum_{c\in\mathcal{C}} \alpha_c = 1\, ,\quad \alpha_c\ge 0,
\end{equation}
with weights $\alpha_c$ selected by grid search in the validation split to maximize the target metric (F1-score in Section~\ref{subsec:benchmarking}, balanced accuracy in Section~\ref{subsec:gastruloid}). Ties are resolved by selecting the most uniform weighting, ensuring a deterministic and permutation-invariant fusion. For single-channel descriptors (here, \textbf{PF-SDM}),
Eq.~\eqref{eq:late_fusion} reduces to the underlying classifier.

We adopt a late fusion scheme rather than feature concatenation because it (i) decouples the regularization of channels with different amplitude scales and dimensionalities, (ii) yields interpretable channel-wise contribution weights $\alpha_c$, and (iii) makes single-channel ablations directly comparable across descriptor variants.

All models are evaluated under an identical 70/15/15 train/validation/test split. Deep-learning methods use the training and validation sets for model fitting and regularization; for PF-SDM and the other learning-free descriptors, features are extracted from the combined training and validation sets. In all cases, the features are used to train a logistic regression classifier, and all methods are evaluated on the held-out test set. 

We compare two PF-SDM descriptor variants (defined above) with classification based on 19 geometric region properties from \texttt{OpenCV}~\cite{opencv_library} (listed in Appendix \ref{app:regionprops}),
Elliptical Fourier Descriptors (EFA)~\cite{Kuhl1982} as well as four deep-learning approaches: masked autoencoders (MAE)~\cite{MAE} in three vision transformer (ViT) configurations, both pretrained on ImageNet-1K and trained from scratch on the target masks, SimCLR~\cite{simclr}, O2VAE~\cite{Burgess2024}, and ShapeEmbed~\cite{Annafoix} with and without an appended size feature. Implementation details for all baselines are given in Appendix \ref{app:baselines}.
Classification performance (F1-score) is reported in Table~\ref{table:benchmarkingF1}, and end-to-end runtimes are compared in Table~\ref{table:runtime}.

\begin{table}[ht]
\centering
\caption{Shape classification performance (F1-score, higher is better) on two computer-vision benchmark data sets: MPEG-7 and BBBC010. The most accurate model is highlighted in bold.}\label{table:benchmarkingF1}
\begin{tabular}{|l|c|c|}
\hline
\textbf{Method} & \textbf{MPEG-7} & \textbf{BBBC010} \\
\hline
Region properties         & 0.45 & 0.83 \\
EFA                       & 0.56 & 0.52 \\ 
MAE (ViT-B, pretrained)   & 0.39 & 0.62 \\
MAE (ViT-L, pretrained)   & 0.40  & 0.66 \\
MAE (ViT-H, pretrained)   & 0.37 & 0.69 \\
MAE (ViT-B)               & 0.87 & 0.64 \\
MAE (ViT-L)               & 0.63 & 0.35 \\
MAE (ViT-H)               & 0.56 & 0.35 \\
SimCLR                    & 0.81 & 0.81 \\
O2VAE                     & 0.31 & 0.68 \\
ShapeEmbed                & 0.90  & 0.80  \\
ShapeEmbed + Size         & 0.77 & 0.80  \\
PF-SDM                    & 0.85 & \textbf{0.85} \\
\textbf{PF-SDM + Skel}    & \textbf{0.91} & \textbf{0.85} \\ 
\hline
\end{tabular}
\end{table}

\begin{table}[ht]
\centering
\caption{End-to-end runtimes for the different models on the benchmarks from Table~\ref{table:benchmarkingF1}; h = hours, m = minutes, s = seconds. The fastest model is highlighted in bold.}
\label{table:runtime}
\begin{tabular}{|l|c|c|}
\hline
\textbf{Method} & \textbf{MPEG-7} & \textbf{BBBC010} \\
\hline
Region properties         & 1m 5s   & 5.5s \\
\textbf{EFA}              & \textbf{4s}      & \textbf{3.5s} \\
MAE (ViT-B, pretrained)   & 4m      & 4m \\
MAE (ViT-L, pretrained)   & 10m     & 11m 30s \\
MAE (ViT-H, pretrained)   & 20m 20s & 21m \\
MAE (ViT-B)               & 10h 30m & 10h 26m \\
MAE (ViT-L)               & 17h 45m & 17h 17m \\
MAE (ViT-H)               & 39h 41m & 37h 6m \\
SimCLR                    & 6h 30m  & 6h 33m \\
O2VAE                     & 7h      & 4h 30m \\
ShapeEmbed                & 3h 30m  & 3h 30m \\
ShapeEmbed + Size           & 3h 30m  & 3h 30m \\
PF-SDM                    & 6m36s   & 3m17s \\
PF-SDM + Skel             & 10m49s  & 5m22s\\
\hline
\end{tabular}
\end{table}

Runtimes were measured on an Apple M1 Max (macOS 15.6.1). Models ran on CPU only, using their default settings, which may have included internal Pytorch optimization. We report end-to-end runtimes, which for PF-SDM include image pre-processing, solving the viscous Eikonal equation, computing the divergence for the skeleton information, computing the PF-T, and extracting the shape features. We also include the time for training, grid parameter search on the validation set, and inference of the logistic regression classifier. The higher runtimes on the MPEG-7 data set are due to the higher geometric complexity of the shapes, which leads to longer Eikonal solve times.

The two data sets highlight two different regimes: MPEG-7 offers a multi-class set of shapes with high diversity and rich medial-axis structure (e.g., tools, animals), whereas BBBC010 is a binary classification of \textit{C.~elegans} roundworms all sharing the same topology and mainly differing in boundary curvature.
On MPEG-7, \textbf{PF-SDM\,+\,Skel} achieves the highest F1-score (0.91) of all tested methods, closely followed by ShapeEmbed (0.90) and the from-scratch MAE ViT-B (0.87). Classic approaches like region properties and EFA clearly underperform. The pretrained MAE models also perform worse than their from-scratch counterparts.

On the simpler and more homogeneous shapes of BBBC010, simple region properties outperform all deep-learning baselines with large from-scratch models particularly affected. PF-SDM still performs best among all tested methods, but because all worms share the same skeletal topology, the skeleton channel adds no additional discriminative signal.

Looking at runtimes, both PF-SDM variants are orders of magnitude faster than from-scratch deep-learning baselines and about as fast as forward inference using a pretrained model. Simple geometric features like EFA and region properties are expectedly the fastest. 

These results show that PF-SDM is competitive with state-of-the-art traditional and deep-learning methods for shape analysis both in terms of accuracy and runtime. They also show that adding the skeleton channel can further improve performance on topologically diverse shape sets.

\subsection{Spatiotemporal shape and patterning analysis}\label{subsec:gastruloid}

Having established the performance and invariance properties of PF-SDM, we apply it to a biological problem: characterizing the developmental dynamics of mouse gastruloids.

Mouse gastruloids are three-dimensional \emph{in vitro} models of early embryonic development that self-organize from aggregates of embryonic stem cells and recapitulate key features of body-axis formation~\cite{Savill2025}. A non-negligible fraction of gastruloids, however, fails to elongate along a single axis and instead develops multiple protrusions. The \texttt{Brachyury::mCherry} reporter molecule marks the developing posterior (i.e., tail-end) pole, providing a ground-truth patterning readout. We consider the problem of predicting whether a given gastruloid will develop a single axis or multiple axes before the posterior marker becomes clearly visible. Solving this task requires combining morphological information (shape and skeleton) with molecular patterning (Brachyury intensity) evolving over time.

\begin{figure}[!t]
\centering
\includegraphics[width=3.5 in]{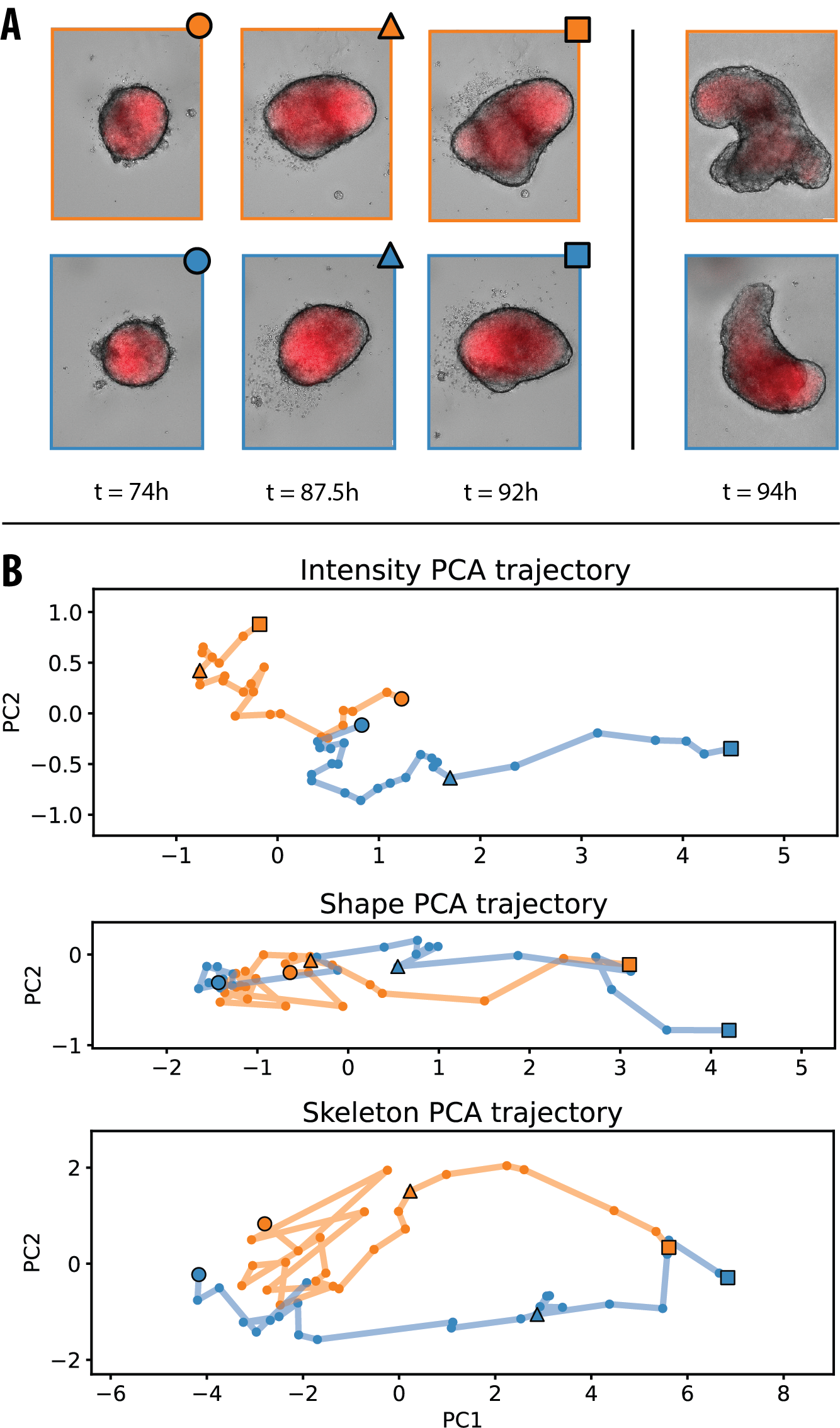}
\caption{Shape analysis of developing mouse gastruloids. (A) Representative gastruloid images from the single-axis (blue frame) and multi-axes (orange frame) classes at early (circle), intermediate (triangle), and late (square) times, along with a posterior reference at 94 h. Brachyury::mCherry intensity is shown in red. (B) PCA trajectories of $N_F=25$ PF-SDM Fourier coefficients c for Brachyury intensity (top), Skeleton information (middle) and shape (bottom) for the examples from (A) across 25 time points.}

\label{fig:gastruloids_pannel}
\end{figure}

Figure~\ref{fig:gastruloids_pannel}(A) shows example microscopy images of  gastruloids from the two classes at three developmental stages (74h, 87.5h, 92h) together with the {\it a-posteriori} reference at 94h. Panel (B) projects the PF-T FFT coefficients of each channel onto their first two principal components, yielding a low-dimensional trajectory for each gastruloid over the 25 imaging time points. The three channels carry visibly complementary information: the intensity trajectory captures the spatial reorganization of the Brachyury reporter as the posterior pole emerges, the skeleton trajectory reflects the evolution of the underlying medial-axis structure, and the shape trajectory tracks the elongation and protrusion patterns of the tissue boundary. The two phenotypes separate earlier in the intensity channel than in the shape channel, where they only diverge once morphological differences become visually apparent. This qualitative observation motivates the joint use of all three channels.

We therefore evaluate the combined \textbf{PF-SDM\,+\,Skel\,+\,Int} descriptor variant from Section~\ref{subsec:benchmarking} with the intensity channel given by the \texttt{Brachyury::mCherry} fluorescence signal on a data set of 78 gastruloids.

We follow the same evaluation protocol as in our previous work~\cite{pfsdm} to enable direct comparison with the two-channel \textbf{PF-SDM\,+\,Int} baseline reported there. For each gastruloid, the spectral PF-T coefficients of every active channel are concatenated across the 25 time points to yield a single sample-level descriptor. We train independent logistic-regression classifiers with balanced class weights for the shape, skeleton, and intensity channels, obtaining per-channel class probabilities $P_{\mathrm{SDF}}, P_{\mathrm{Skel}}, P_{\mathrm{Int}}$ that are combined by the late fusion rule of Eq.~\eqref{eq:late_fusion}. Performance is estimated under repeated stratified shuffle splits ($k=5$, test size $20\%$), with the fusion weights $\alpha_c$ tuned per split to maximize balanced accuracy. We report mean $\pm$ standard deviation across splits. 

The fused model in its present formulation achieves an accuracy of $0.90\pm0.050$ and a balanced accuracy of $0.91\pm0.080$, improving over the previously reported two-channel baseline (accuracy $0.875\pm0.039$, balanced accuracy $0.667\pm0.000$)~\cite{pfsdm}. The shape channel receives the highest weight ($\alpha_{\mathrm{SDF}}=0.41$), while the second highest weight comes from the skeleton channel with ($\alpha_{\mathrm{Skel}}=0.36$), indicating that medial-axis information provides a strong discriminative signal for this task. The slightly increased variance in the balanced accuracy reflects the limited number of minority-class samples.

Together with the results from subsection~\ref{subsec:benchmarking}, this application shows that PF-SDM provides strong discriminative power across shape classes of varying complexity and extends beyond pure geometry to include additional intensity signals through interpretable per-channel fusion. Discriminative power, as measured by the F1-score on the two benchmarking datasets, and by the accuracy and balanced accuracy on the gastruloid dataset, is on par with or better than state-of-the-art deep-learning models while remaining fully deterministic, training-free, and interpretable.

\section{Conclusion}\label{sec:conclusion}

We introduced the \emph{Push-Forward Transform} (PF-T), a continuous, smooth, and deterministic map for comparing scalar functions defined over different shape domains by mapping them to a common reference domain through a structure-preserving diffeomorphism. The framework yields descriptors that are invariant to translation, rotation, reflection, re-parametrization and uniform scaling by construction, while preserving the intrinsic information encoded in the original functions. Applied to signed distance functions (SDF), the PF-T induces the PF-SDF and an associated morphometric, the PF-SDM, in either Fourier (in 2D) or spherical-harmonic (in 3D) bases and provides explicit control over which geometric properties dominate the comparison.

Exploiting the viscous formulation of the Eikonal equation, we derived an algorithm that extracts a parameterized medial axis (spine) directly from the smooth polynomial SDF approximation, providing an intrinsic, landmark-free coordinate for elongated shapes. Across 2D, 3D, and time-resolved data sets, PF-SDM matches or exceeds the classification accuracy of state-of-the-art traditional and deep-learning methods while requiring no training data and running orders of magnitude faster than deep-learning models. In dynamic biological gastruloid data, fusing PF-SDM shape descriptors with PF-T intensity and skeleton features further improved predictive accuracy over the previous state of the art, illustrating the value of the framework for joint geometric and signal analysis.

The proposed framework also has limitations. The PF-T, as was defined here, requires a smooth diffeomorphism between the reference and target domains. It is therefore restricted to shapes that are topologically equivalent to the reference domain. Here, we only considered topological spheres. Extending the framework to other topologies would require choosing a topology-matched reference domain and defining a boundary deformation map and extension that remain diffeomorphic. If the reference domain lacks rotational symmetry, rotation invariance is no longer obtained from the spectral representation and may require an additional transformation.

Furthermore, the parameterized spine algorithm relies on PCA-based ordering of the skeletal point cloud, which restricts its applicability to elongated shapes with a single dominant axis. Extending it to branched and near-isotropic shapes would require replacing it with graph-based traversal.

Finally, the presented PF-T is restricted to scalar-valued functions. Generalizing it to vector- and tensor-valued fields would require extending the notion to the push-forward of differential forms using exterior calculus~\cite{Cartan2006-hr}. 

Several possibilities for future applications of the PF-T follow naturally. First, the entire PF-T computational pipeline---from solving the viscous Eikonal equation to extending the boundary deformation map---is differentiable in its parameters, offering the possibility of end-to-end gradient-based optimization. Second, the framework provides a principled deterministic feature extractor that can be integrated with deep-learning models to combine geometric inductive biases with data-driven flexibility. More broadly, because the PF-T applies to any continuous scalar function defined over a shape domain, the framework provides a unified mathematical formulation for the joint analysis of morphology and spatial signals in any setting where shapes and shape-correlated distributions must be compared invariantly and at scale.

\section{Software and Data Availability}
The 2D implementation of PF-SDM is publicly available at \url{https://git.mpi-cbg.de/mosaic/software/machine-learning/pf-sdm}, together with a \texttt{napari} plugin at \url{https://git.mpi-cbg.de/mosaic/software/machine-learning/pf-sdm-napari-plugin}, as released with our previous work~\cite{pfsdm}. In this work, we additionally release the 3D version of PF-SDM, including parameterized medial-axis (spline) extraction and the synthetic demo shapes and mesh data used in Section~\ref{sec:experiments} and in Appendix~\ref{app:spine}, at \url{https://git.mpi-cbg.de/mosaic/software/machine-learning/3d}.

\section*{Acknowledgments}
\label{sec:acknowledgments}
This work was supported by the German Ministry of Research, Technology and Space (BMFTR, Bundesministerium f\"{u}r Forschung, Technologie und Raumfahrt) as part of the Center for Scalable Data Analytics and Artificial Intelligence (ScaDS.AI) Dresden/Leipzig.
Large language models were used for proof-reading and improving the quality of the text, as well as for code generation in the publicly available implementation.

\bibliographystyle{unsrtnat}

\appendix
\renewcommand{\thefigure}{A\arabic{figure}}
\renewcommand{\thetable}{A\arabic{table}}
\renewcommand{\theequation}{A\arabic{equation}}
\renewcommand{\thealgorithm}{A\arabic{algorithm}}
\setcounter{figure}{0}
\setcounter{table}{0}
\setcounter{equation}{0}

\section{Proof of Theorem~\ref{thm:diffeomorphism_extensions}}\label{app:theorem}
\begin{proof}
Let $x=(x_1,\ldots,x_d)\in\mathbb{R}^d$ be the Cartesian coordinate vector and $\theta\in\Theta_d\subset\mathbb{R}^{d-1}$ the collection of all angular coordinates.
We start by proving (i). By construction, the radial extension
$\Psi_r : \overline{B_r} \setminus \{0\} \to \overline{S} \setminus \{0\}$ is of class $C^1$. Denoting by $D_x\Psi_r$ its Jacobian in Cartesian coordinates and by $D\Psi_r$ its Jacobian in spherical coordinates $(r,\theta)$. The change of variables $(r,\theta)$ to $x$ with Jacobian factor $r^{d-1}J_\omega(\theta)$ gives
\begin{align*}
 \det D_x\Psi_r(x)=\frac{\det D\Psi_r(r,\theta)}{r^{d-1}J_\omega(\theta)} &=  \frac{\nu(\theta)^dr^{d-1}J_\omega(\theta)}{r^{d-1}J_\omega(\theta)} \\&=\nu(\theta)^d>0 ,
\end{align*}
where
$$
J_\omega(\theta):= \det(\omega(\theta),\partial_{\theta_1}\omega(\theta),\dots, \partial_{\theta_d-1}\omega(\theta))\, .
$$
depends only on the angular parametrization.

Thus, $\det D_x\Psi_r(r,\theta) > 0$ for all $r>0$.
For instance, in dimension $d=2$, a direct computation yields
\[
\det D\Psi_r(r,\theta) = r\,\nu(\theta)^2(\cos^2(\theta)+\sin^2(\theta))= r\,\nu(\theta)^2,
\]
and since the deformation map $\nu(\theta)\neq 0$ by definition, $\det D_x\Psi_r(r,\theta)=\nu(\theta)^2>0$.
Injectivity follows from the star-convexity of $\partial S$, which ensures that each point $x \in S \setminus \{0\}$ admits a unique representation of the form
\[
x = \rho\,\omega(\theta(x)), \textrm{ for some } \rho \in (0,\nu(\theta(x))).
\]
This yields a well-defined inverse of $\Psi_r$, proving global injectivity.

Since $\det D\Psi_r > 0$ on $B_r \setminus \{0\}$, the inverse function theorem implies that $\Psi_r$ is locally invertible. Combined with global injectivity, this yields a global $C^1$ inverse on $\overline{S} \setminus \{0\}$. Hence, $\Psi_r$ is a $C^1$-diffeomorphism.

For $d=2$, the proof for the harmonic extension follows by applying the Alessandrini--Nesi theorem~\cite{Giovanni2009}, using that the boundary deformation map satisfies the injectivity and non-degeneracy conditions, i.e., $\det(D\Psi_h)>0$ on $\partial B_r$.
\end{proof}

\section{Proof of Proposition~\ref{prop:invariance}}\label{app:proposition}
\begin{proof}
We prove the claim of the invariance proposition from the main text separately for $d=2$ and $d=3$.

\medskip
\noindent
\textbf{Case $\boldsymbol{d=2}$.}
Any orthogonal transformation $\mathbb{K}\in O(2)$ acts on the angular
variable as
\[
\theta \to \sigma\theta+\alpha,
\qquad
\sigma\in\{1,-1\},
\]
where $\sigma=1$ corresponds to a rotation and $\sigma=-1$ to a
reflection. Hence
\[
f_{S_{\mathbb{K}}^*}(r,\theta)
=
f_{S^*}(r,\sigma\theta+\alpha).
\]
The corresponding Fourier coefficients satisfy
\[
c_k^{\mathbb{K}}(r)
=
e^{i\beta_k}c_{\sigma k}(r)
\]
for some phase factor $e^{i\beta_k}$ with $|e^{i\beta_k}|=1$.
Therefore, $|c_k^{\mathbb{K}}(r)|=|c_{\sigma k}(r)|$.
If $\sigma=1$, this directly yields $|c_k^{\mathbb{K}}(r)|=|c_k(r)|$.
If $\sigma=-1$, then, since $f_{S^*}$ is real-valued,
$c_{-k}(r)=\overline{c_k(r)}$, and hence
$|c_k^{\mathbb{K}}(r)|=|c_{-k}(r)|=|c_k(r)|$.
Thus, in both cases, $|c_k^{\mathbb{K}}(r)|=|c_k(r)|$.
Moreover, the normalization factor is preserved:
\[
\sum_{k=0}^{N_F-1}|c_k^{\mathbb{K}}(r)|
=
\sum_{k=0}^{N_F-1}|c_k(r)|.
\]
Consequently,
$\mathbf{c}^{F}_{S_{\mathbb{K}}^*}(r) = \mathbf{c}^{F}_{S^*}(r)$,
which proves invariance under rotations and reflections in dimension~$2$.

\medskip
\noindent
\textbf{Case $\boldsymbol{d=3}$.}
Let $\mathbb{K}\in O(3)$ be an orthogonal transformation that includes
rotations and reflections. For each degree $\ell$, the spherical harmonic
coefficients of the transformed field are obtained from the original
coefficients by an orthogonal, unitary change of basis within the same degree:
\[
c_{\ell,m}^{\mathbb{K}}(r)
=
\sum_{m'=-\ell}^{\ell}
D^{\ell}_{m,m'}(\mathbb{K})
\,c_{\ell,m'}(r),
\]
where $D^\ell(\mathbb{K})$ denotes the representation matrix associated with
degree $\ell$.
Since $D^\ell(\mathbb{K})$ is unitary, the degree-wise spectral energy is
preserved:
\begin{align*}
E_\ell^{\mathbb{K}}(r)
&= \sum_{m=-\ell}^{\ell} |c_{\ell,m}^{\mathbb{K}}(r)|^2
 = \sum_{m=-\ell}^{\ell} |c_{\ell,m}(r)|^2 \\
&= E_\ell(r),
\qquad \ell = 0,\ldots,L\, .
\end{align*}
Therefore, the spherical harmonic descriptor satisfies
$\mathbf{c}^{H}_{S_{\mathbb{K}}^*}(r) = \mathbf{c}^{H}_{S^*}(r)$
and is invariant under rotations and reflections in dimension~$3$.
\end{proof}

\section{Proof of Lemma~\ref{lemma:sing_sdf}}\label{app:lemma}
\begin{proof}
Let $c \in M_S$, and let $x_l, x_r \in \partial S$ be two distinct closest points, i.e.
\[
\|c - x_l\|_2 = \|c - x_r\|_2 = \phi_S(c) =: d\, .
\]

Define
\[
z_l^\delta := c - \delta \frac{c - x_l}{\|c - x_l\|_2},
\qquad
z_r^\delta := c - \delta \frac{c - x_r}{\|c - x_r\|_2},
\]
for $0 < \delta < d$, and let $x_l^\delta, x_r^\delta \in \partial S$ be the closest points to $z_l^\delta, z_r^\delta$, respectively. Then, we have that $z_l^\delta\to c$ and $z_r^\delta\to c$ as $\delta \to 0$. Moreover, since $\partial S$ is a compact set, there exists a subsequence such that $x_l^\delta\to \overline{x}_l$ and $x_r^\delta \to\overline{x}_r$, for $\overline{x}_l,\overline{x}_r\in \partial S$.

We prove that $\overline{x}_l=x_l$. By optimality of $x_l^\delta$ we have
\begin{align}
    \|z_l^\delta-x_l^\delta\|_2 \leq \|z_l^\delta-x_l\|_2\, .
\end{align}
Taking the limit $\delta\to 0$ and using the continuity of the norm, we obtain
\begin{align}
    \|c-\overline{x}_l\|_2 \leq \|c-x_l\|_2\, .
\end{align}
Since $x_l$ is a closest point to $c$, we also have
\begin{align}
    \|c-x_l\|_2 \leq \|c-\overline{x}_l\|_2\, ,
\end{align}
and therefore $\|c-\overline{x}_l\|_2 = \|c-x_l\|_2$.

To show that the minimizer of $\|c-x\|_2$ is unique, we prove that
\begin{equation}
    \|z_l^\delta -x_l\|_2<\|z_l^\delta-x\|_2,\ \forall x\in \partial S\textrm{, s.t. }\|c-x\|_2 = \|c-x_l\|_2.
\end{equation}
To show this, let $x\in \partial S$ such that $\|c-x_l\|_2^2=\|c-x\|_2^2$, then compute
\begin{align}
    \|z_l^\delta -x_l\|_2^2 = \|c-x_l\|_2^2-2\delta \|c-x_l\|_2+\delta^2.
\end{align}
On the other side,
\begin{align}
    \|z_l^\delta-x\|_2^2= \|c-x\|_2^2-2\frac{\delta}{||c-x_l||_2} \langle c-x,c-x_l\rangle_2+\delta^2.
\end{align}
Using that $\|c-x_l\|_2^2 = \|c-x\|_2^2$ and canceling the common terms, we have that
\begin{align*}
    \|z_l^\delta -x_l\|_2<\|z_l^\delta-x\|_2
    \iff \|c-x_l\|_2^2 > \langle c-x,c-x_l\rangle_2.
\end{align*}
By the Cauchy--Schwarz inequality, $\langle c-x,c-x_l\rangle_2 \leq \|c-x\|_2\|c-x_l\|_2 = \|c-x_l\|_2^2$, and since $x\neq x_l$, the inequality is strict.
This shows that $x_l\in \partial S$ is the unique minimizer of the constrained optimization
\begin{equation}
    \inf\limits_{\stackrel{x\in \partial S}{\|c-x\|_2=\|c-x_l\|_2}}\|z_l^\delta-x\|_2\, .
\end{equation}
Therefore, $\overline{x}_l=x_l$. The proof follows analogously for $x_r^\delta\in \partial S$.
Thus,
\[
\nabla \phi_S(z_l^\delta) = \frac{z_l^\delta - x_l}{\|z_l^\delta - x_l\|_2},
\qquad
\nabla \phi_S(z_r^\delta) = \frac{z_r^\delta - x_r}{\|z_r^\delta - x_r\|_2}.
\]

In the limit $\delta \to 0$, we obtain
\[
\nabla \phi_S(z_l^\delta) \to \frac{c - x_l}{\|c - x_l\|_2},
\qquad
\nabla \phi_S(z_r^\delta) \to \frac{c - x_r}{\|c - x_r\|_2}.
\]

Since $x_l \neq x_r$, these limits differ, and therefore $\phi_S$ is not differentiable at $c$.
\end{proof}

\section{Parametric Medial-Axis Extraction}\label{app:spine}

Given the polynomial SDF $\phi_\xi$ and its normalized-gradient divergence $\nabla\cdot \eta$ (Eq.~\eqref{eq:normgrad} of the main text), Algorithm~\ref{alg:spine} extracts a smooth parametric medial axis $\gamma:[0,1]\to\mathbb{R}^d$ for elongated shapes with a single dominant principal axis.

\begin{algorithm}[H]
\caption{Parametric medial-axis extraction}\label{alg:spine}
\begin{algorithmic}[1]
\REQUIRE Polynomial SDF $\phi_\xi\in \Pi_n(S)$ over shape domain $S$, uniform grid $\mathcal{G}_S\subseteq S$, quantile level $\alpha\in(0,1)$, step size $h>0$, B-spline order $K$
\ENSURE Parametric medial axis $\gamma:[0,1]\to\mathbb{R}^d$

\STATE Compute $\eta(x) = \nabla\phi_\xi(x)/\|\nabla\phi_\xi(x)\|$ and $(\nabla\cdot \eta)(x)$ for all $x\in\mathcal{G}_S$
\STATE Compute the $\alpha$-quantile $q_\alpha$ of $\nabla\cdot \eta$ over $\mathcal{G}_S$
\STATE Form the skeleton point cloud
\[
\mathcal{P}_{\mathrm{skel}} = \{x \in \mathcal{G}_S : (\nabla\cdot \eta)(x) \le q_\alpha\}
\]
\STATE\label{alg:pca} Compute the centroid $\bar{x}$ and first principal component $v_1$ of $\mathcal{P}_{\mathrm{skel}}$; order points by scalar projection $t_i=(x_i-\bar{x})^\top v_1$ and smooth to obtain a discrete centerline $(\hat{x}_1,\ldots,\hat{x}_m)$
\STATE Compute endpoint tangents $\tau_1=\tfrac{\hat{x}_2-\hat{x}_1}{\|\hat{x}_2-\hat{x}_1\|}$ and $\tau_m=\tfrac{\hat{x}_m-\hat{x}_{m-1}}{\|\hat{x}_m-\hat{x}_{m-1}\|}$
\WHILE{$\phi_\xi(\hat{x}_1)>0$}
    \STATE $\hat{x}_1 \gets \hat{x}_1 - h\,\tau_1$
\ENDWHILE
\WHILE{$\phi_\xi(\hat{x}_m)>0$}
    \STATE $\hat{x}_m \gets \hat{x}_m + h\,\tau_m$
\ENDWHILE
\STATE Assign chord-length parameters $s_i\in[0,1]$ to the extended centerline and fit a B-spline of order $K$ by solving
\[
    \min_{P_1,\ldots,P_K}\sum_{i=1}^{m}\Bigl\|\hat{x}_i - \sum_{j=1}^K B_j(s_i)\,P_j\Bigr\|_2^2
\]
\RETURN $\gamma(s)=\sum_{j=1}^K B_j(s)\,P_j$
\end{algorithmic}
\end{algorithm}

\begin{figure}[!t]
\centering
\includegraphics[width=3.5in]{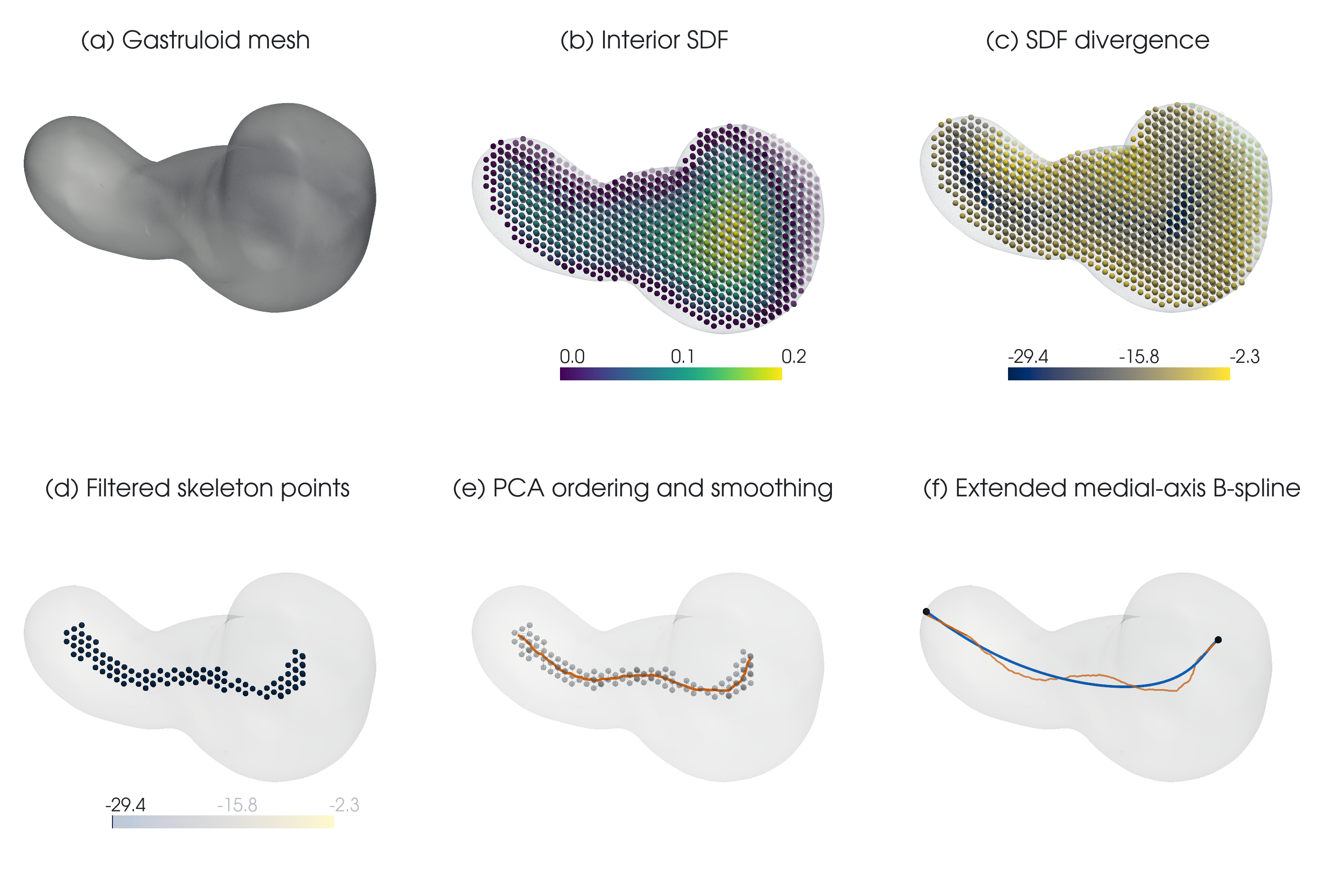}
\caption{Extraction of a parameterized medial axis for a 3D biological gastruloid shape from Ref.~\cite{Savill2025}, following Algorithm~\ref{alg:spine}: (a) shape surface mesh; (b) central plane cut of the polynomial SDF values sampled inside the volume; (c) central plane cut of the divergence of the normalized gradient; (d) filtered skeleton point cloud; (e) ordered and smoothed skeleton points obtained by PCA projection (red); (f) final medial axis as a smooth parametric B-spline (blue).}
\label{fig:spline}
\end{figure}

\section{Synthetic Shape Generation}\label{app:synthetic}

The five 2D shape classes are equilateral triangle, square, regular pentagon, regular hexagon, and the five-petaled flower from~\cite{Gandhi2021}, each class is represented by ten variants:
the original shape, rotations by \(30^\circ,45^\circ,60^\circ\),
reflection about the \(y\)-axis, scalings by \(2\) and \(3\),
translation by \((1,3)\), and Gaussian perturbations with
\(\sigma=0.005\) and \(\sigma=0.01\). For each augmented contour we rasterize a binary image, segment it and follow the same preprocessing pipeline for all. Specifically we extract 100 contour points at equal arc-length intervals, the contours are then centered to the origin and rescaled to be inscribed in the unit circle.

In our synthetic 3D benchmark, each class is represented by five transformed \texttt{trimesh} meshes (\url{https://github.com/mikedh/trimesh}). The base classes are a sphere, ellipsoid, cube, cone, pyramid, and capsule. For each class, we generate five variants using rotations by \(60^\circ,90^\circ,120^\circ,150^\circ,\) and \(180^\circ\) about the axis \((0.5,2,1)\), translations by \((1,-1,2),(2,-2,4),(3,-3,6),(4,-4,8),\) and \((5,-5,10)\), and uniform scalings by \(6,11,16,21,\) and \(26\), respectively. After transformation, each mesh is centered at the origin and uniformly rescaled to fit inside the unit ball. Since these synthetic objects are available directly as surface meshes, we evaluate signed-distance samples from the mesh surface and fit the same polynomial SDF surrogate used by the PF-SDM pipeline. Thus, the subsequent push-forward map, polynomial representation, and spherical-harmonic PF-SDM descriptors remain unchanged.

\section{Region Properties Used as Baseline Features}\label{app:regionprops}

For the region-properties classification baseline, we extract 19 shape features
from binary masks using \texttt{OpenCV}~\cite{opencv_library}:
area, convex area, perimeter, major and minor axis lengths, extent, eccentricity,
solidity, orientation, bounding-box width and height, maximum Feret diameter,
and the seven Hu moments computed from the image moments.

\section{Benchmark Implementation Details}\label{app:baselines}

We used the same stratified 70/15/15 split for all methods. This produced
980/210/210 train/validation/test samples for MPEG-7 and 984/211/211 for
BBBC010. Contours were extracted using the same \texttt{ConvertBinaryMasksToDMs}
procedure as also used for ShapeEmbed~\cite{Annafoix}: the longest object contour was retained and periodically spline-resampled to 64 points.

\subsection{Deterministic Baselines}

\paragraph*{\textit{Elliptical Fourier Descriptors}}
We used the \href{https://pyefd.readthedocs.io/en/latest/}{\texttt{pyefd}}
library to compute EFA. Descriptors were extracted
with \texttt{pyefd.elliptic\_fourier\_descriptors} using order 30 (order 15 for the interpretability MDS in Fig.~\ref{fig:mds_invariances}). Each harmonic contributes four coefficients $(a_n, b_n, c_n, d_n)$, yielding 120-dimensional (respectively 60-dimensional) feature vectors per shape. The coefficients were then normalized with \texttt{pyefd.normalize\_efd} to enforce invariance to translation, rotation, and scale.

\paragraph*{\textit{PF-SDM and PF-SDM + Skel}}
Both variants follow the definitions in Section~\ref{subsec:benchmarking} of the main text. For the BBBC010 data set, we excluded the GFP channel which reports worm viability (dead worms accumulate the dye and fluoresce brightly while live worms remain dark). This was to prevent clever-Hans behavior and keep the benchmarking shape-focused. The excluded PF-SDM\,+\,Int variant has the same runtime as PF-SDM\,+\,Skel and achieves an F1 test score of $0.86$.

\subsection{Deep-Learning Baselines}
For the deep-learning baselines, we limited training to 200 epochs as was also done for ShapeEmbed~\cite{Annafoix}.

\paragraph*{\textit{ShapeEmbed and ShapeEmbed + Size}}
We used the official ShapeEmbed~\cite{Annafoix} implementation
with a 128-dimensional latent space. Each input was the $64 \times 64$ Euclidean distance matrix computed from the resampled contour. For the ShapeEmbed + Size
variant, we concatenated the latent mean with ShapeEmbed's recovered original scale feature.

\paragraph*{\textit{MAE}}
We used the official \texttt{facebookresearch/mae}~\cite{MAE} implementation.
Binary masks were resized to $224 \times 224$. For pretrained MAE, we loaded the ImageNet-1K MAE checkpoints for the evaluated ViT configurations. For the data-trained MAE models, we initialized the MAE architecture from scratch and trained it on the training masks for 200 epochs using a batch
size of 16.

\paragraph*{\textit{SimCLR}}
We used the \texttt{sthalles/SimCLR}~\cite{simclr} implementation
with a ResNet18 backbone and
a 128-dimensional projection head. Binary masks were resized to $224 \times 224$ and positive pairs were generated with the default SimCLR augmentation pipeline: random resized crops, horizontal
flips, color jitter, random grayscale conversion, and Gaussian blur. The model
was trained on the training split for 200 epochs with batch size 256.

\paragraph*{\textit{O2VAE}}
We used the native O2VAE~\cite{Burgess2024} implementation
with binary masks as inputs, since the
benchmark focuses on shape. Masks were resized to $64 \times 64$. The O2VAE used an O(2)-equivariant CNN encoder and a 128-dimensional latent space. Training used batch size 256,
200 epochs, random rotations, and vertical flips.

\end{document}